\documentclass{article}

\usepackage[final]{neurips_2025}

\usepackage[utf8]{inputenc} 
\usepackage[T1]{fontenc}    
\usepackage{hyperref}       
\usepackage{url}            
\usepackage{booktabs}       
\usepackage{amsfonts}       
\usepackage{nicefrac}       
\usepackage{microtype}      
\usepackage{xcolor}         

\usepackage{amsmath}
\usepackage{amssymb}
\usepackage{mathtools}
\usepackage{amsthm}
\usepackage{thm-restate}

\usepackage{pifont}

\definecolor{darkgreen}{rgb}{0.3, 0.7, 0.3}
\definecolor{lightred}{rgb}{1.0, 0.7, 0.7}
\definecolor{lightgreen}{rgb}{0.7, 1.0, 0.7}

\usepackage{xcolor}
\definecolor{lightblue}{RGB}{173, 216, 230}
\definecolor{darkseagreen}{RGB}{143, 188, 143}
\definecolor{burlywood}{RGB}{222, 184, 135}
\definecolor{indianred}{RGB}{205, 92, 92}
\definecolor{lightcoral}{RGB}{240, 128, 128}
\definecolor{pink}{RGB}{245, 180, 180}
\definecolor{navyblue}{RGB}{103, 160, 180}

\usepackage[capitalize,noabbrev]{cleveref}

\theoremstyle{plain}
\newtheorem{theorem}{Theorem}[section]

\newtheorem{lemma}[theorem]{Lemma}

\theoremstyle{definition}

\newtheorem{assumption}[theorem]{Assumption}
\theoremstyle{remark}
\newtheorem{remark}[theorem]{Remark}

\usepackage[textsize=tiny]{todonotes}

\usepackage{hyperref}

\usepackage[normalem]{ulem}

\usepackage{comment}

\usepackage{enumitem}

\usepackage{amsmath,amsfonts,bm}

\def\eqref#1{equation~\ref{#1}}

\def\1{\bm{1}}

\def\ve{{\bm{e}}}
\def\vf{{\bm{f}}}

\def\vh{{\bm{h}}}

\def\vs{{\bm{s}}}

\def\vx{{\bm{x}}}
\def\vy{{\bm{y}}}
\def\vz{{\bm{z}}}

\def\mB{{\bm{B}}}

\def\mF{{\bm{F}}}

\def\mW{{\bm{W}}}
\def\mX{{\bm{X}}}

\DeclareMathAlphabet{\mathsfit}{\encodingdefault}{\sfdefault}{m}{sl}
\SetMathAlphabet{\mathsfit}{bold}{\encodingdefault}{\sfdefault}{bx}{n}

\def\gE{{\mathcal{E}}}
\def\gF{{\mathcal{F}}}
\def\gG{{\mathcal{G}}}

\def\gN{{\mathcal{N}}}

\def\gP{{\mathcal{P}}}

\def\gV{{\mathcal{V}}}

\def\gX{{\mathcal{X}}}

\def\sR{{\mathbb{R}}}

\newcommand{\sigmoid}{\sigma}

\usepackage{chemfig} \usepackage{booktabs}
\usepackage{wrapfig}
\usepackage{comment}
\usepackage{hyperref}
\usepackage{url}
\usepackage{gensymb}
\usepackage{amsmath}
\usepackage{amssymb}
\usepackage{mathtools}
\usepackage{amsthm}
\usepackage{thm-restate}
\usepackage{multirow}
\usepackage{microtype}
\usepackage{graphicx}
\usepackage{subfigure}
\usepackage{subcaption}
\usepackage{booktabs} 

\usepackage{tikz}
\usetikzlibrary{decorations.pathreplacing, positioning, calc}

\title{
Towards Multiscale Graph-based Protein Learning with Geometric Secondary Structural Motifs
}

\vspace{-0.5cm}
\author{Shih-Hsin Wang$^{1}$, Yuhao Huang$^{1}$,  Taos Transue$^{1}$
, Justin Baker$^2$, \\
{\bf  Jonathan Forstater$^3$, Thomas Strohmer$^3$ \& Bao Wang$^1$\thanks{Correspond to \texttt{wangbaonj@gmail.com}}} \\
$^1$Department of Mathematics and Scientific Computing and Imaging (SCI) Institute \\
University of Utah, Salt Lake City, UT 84102, USA\\
$^2$Department of Mathematics, UCLA, Los Angeles, CA 90095, USA\\
$^3$Department of Mathematics, UC Davis, Davis, CA 95616, USA\\
}\vspace{-0.9cm}

\begin{document}
\maketitle

\begin{abstract}
Graph neural networks (GNNs) have emerged as powerful tools for learning protein structures by capturing spatial relationships at the residue level. However, existing GNN-based methods often face challenges in learning multiscale representations and modeling long-range dependencies efficiently. In this work, we propose an efficient multiscale graph-based learning framework tailored to proteins. Our proposed framework contains two crucial components: (1) It constructs a hierarchical graph representation comprising a collection of fine-grained subgraphs, each corresponding to a secondary structure motif (e.g., $\alpha$-helices, $\beta$-strands, loops), and a single coarse-grained graph that connects these motifs based on their spatial arrangement and relative orientation. (2) It employs two GNNs for feature learning: the first operates within individual secondary motifs to capture local interactions, and the second models higher-level structural relationships across motifs. Our modular framework allows a flexible choice of GNN in each stage. 
Theoretically, we show that our hierarchical framework preserves the desired maximal expressiveness, ensuring no loss of critical structural information. Empirically, we demonstrate that integrating baseline GNNs into our multiscale framework remarkably improves prediction accuracy and reduces computational cost across various benchmarks.

\end{abstract}

\vspace{-0.45cm}
\section{Introduction}
\label{sec:intro}
\vspace{-0.2cm}
\begin{wrapfigure}{r}{0.4\linewidth}\vspace{-1.2cm}
    \centering
\centering
    \begin{tabular}{c c}
        \includegraphics[width=0.16\columnwidth]{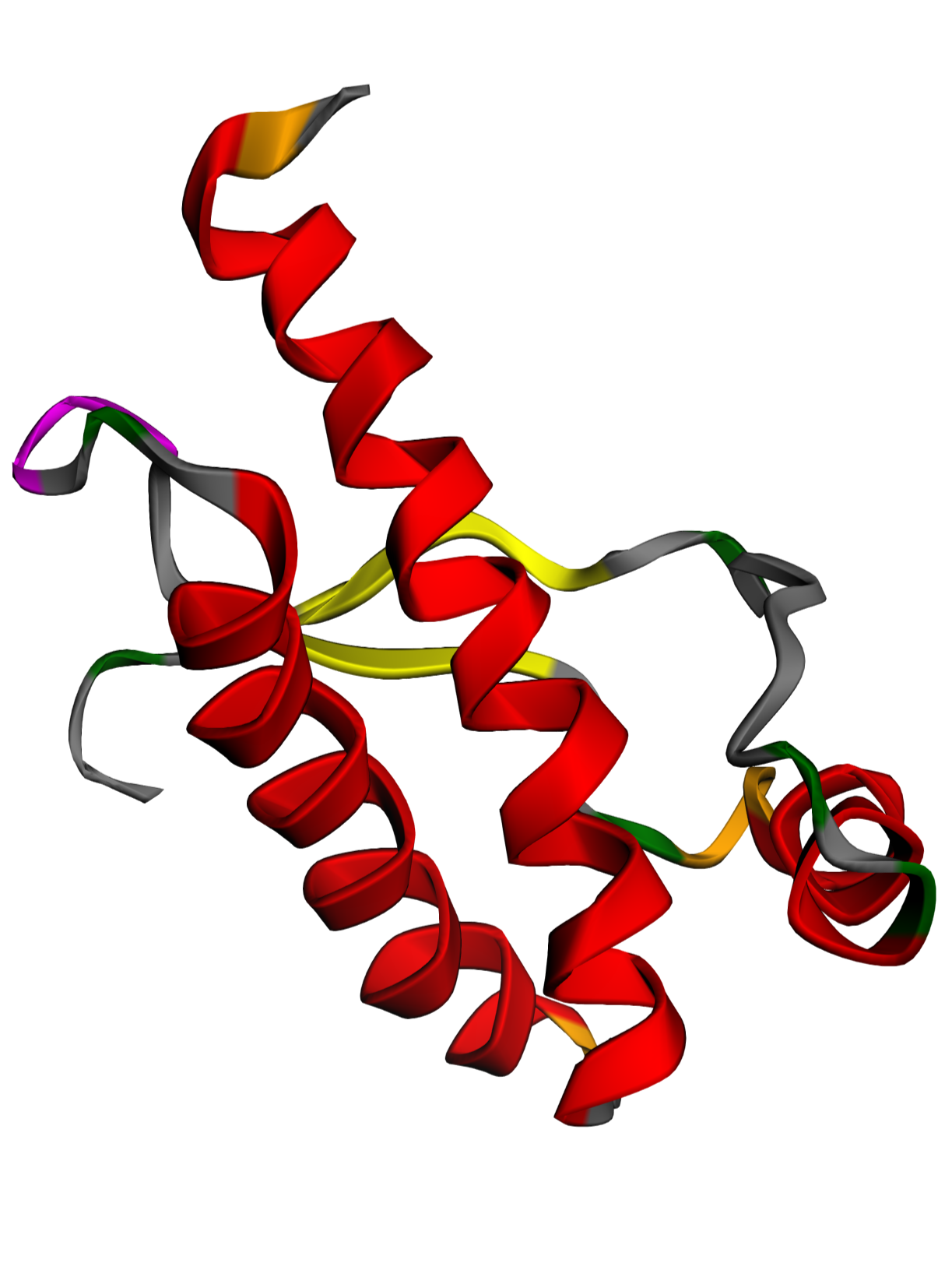} & 
        \hspace{0.5cm}
    \includegraphics[width=0.125\columnwidth]{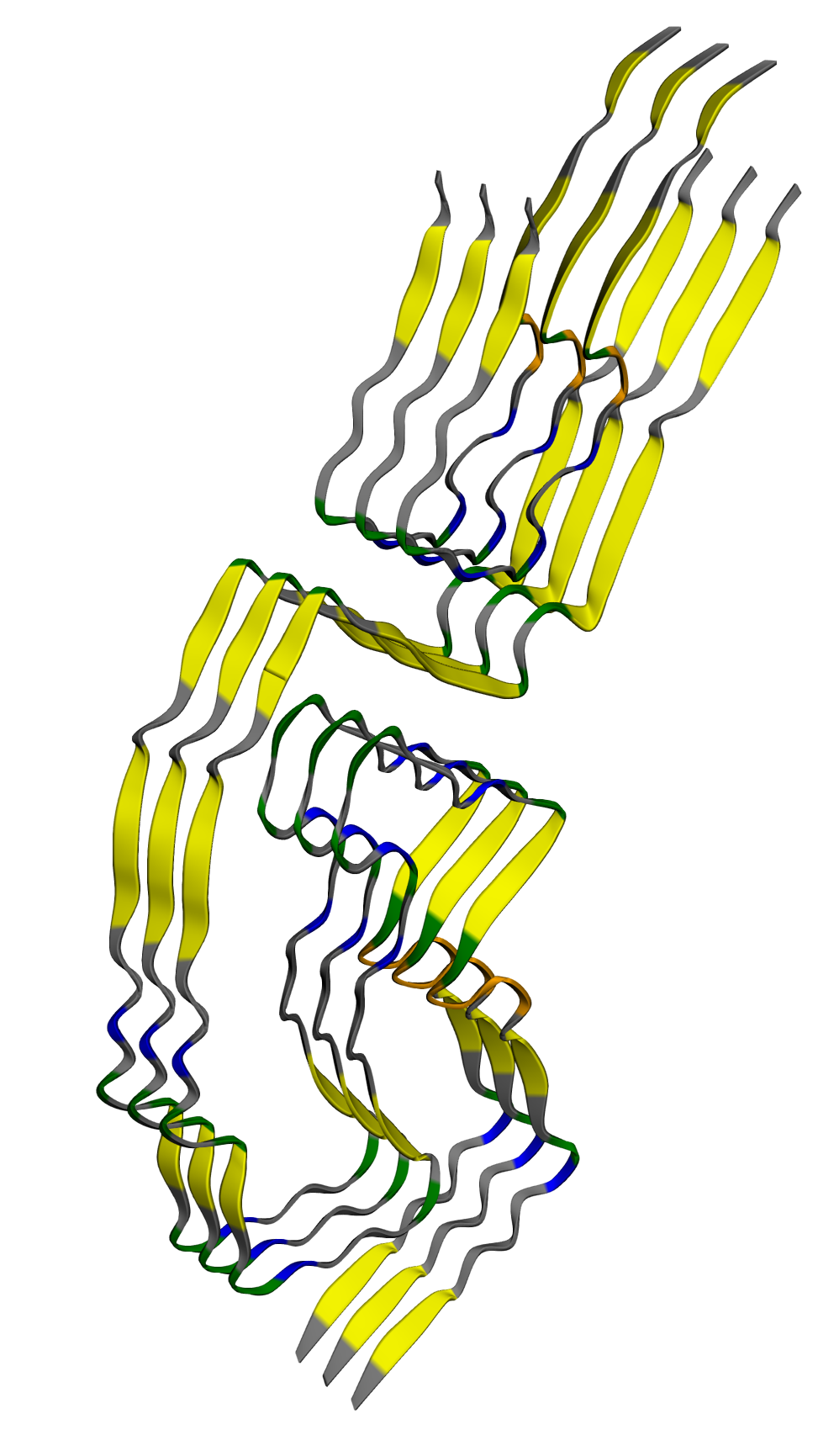} \\
    \end{tabular}
    \vspace{-0.35cm}
        \caption{\footnotesize
        An example of two prion proteins with identical primary structures but distinct secondary structures. The normal 
        form, hamster PrP$^{\rm C}$ (left), contains $\alpha$-helical structures (marked in red). In contrast, its misfolded counterpart, PrP$^{\rm Sc}$, on the right, lacks these helices and adopts a $\beta$-sheet-rich structure (marked in yellow).
        This structural change leads to abnormal aggregation, ultimately resulting in fatal consequences. 
        }
    \label{fig:PrP}\vspace{-0.25cm}
\end{wrapfigure}
Machine learning (ML) has transformed computational protein modeling over the past decade~\cite{jumper2021highly,watson2023novo}. The 2024 Nobel Prize in Chemistry, awarded for groundbreaking contributions to computational protein design, highlights the transformative advancements~\cite{abriata2024nobel}.
Among ML techniques, graph-based methods excel at learning to encode complex chemical interactions in three-dimensional space, effectively handling spatial information. For instance, prior works (cf.~\cite{hermosilla2020intrinsic, jing2021equivariant}) represent atoms as nodes and leverage edges to capture their interactions, such as chemical bonds and hydrogen bonds. This edge-based encoding enables graph neural networks (GNNs) to effectively model critical chemical interactions, which are essential for understanding protein structure-function relationships.

However, representing proteins at the atomic level incurs significant computational costs due to the sheer size and complexity of the resulting graphs. To address this, recent methods have shifted towards residue-level representations, where each residue serves as a single node in the graph (cf.~\cite{jing2020learning, zhang2022protein, wang2022learning, wangtheoretically}). This coarse-graining not only reduces the graph size but also aligns naturally with the primary structure of proteins, where residues are the fundamental building blocks that determine protein properties and overall structure. 
%
%
%
However, existing residue-level approaches face challenges in capturing critical multiscale features. In particular, secondary structures—such as $\alpha$-helices and $\beta$-sheets—are formed by groups of residues and play a fundamental role in protein folding. Ignoring these higher-level structures can hinder the model’s capacity to distinguish between biologically distinct protein states that share identical primary sequences. An illustrative example is the prion protein~\cite{caughey2022pathogenic}. The normal cellular form of the prion protein, PrP$^{\rm C}$, is typically found on the surface of healthy neurons. However, it can misfold into the pathogenic form, PrP$^{\rm Sc}$, without any change in its primary structure. The key difference lies in the spatial arrangement of residues, which are reorganized into different types of secondary structures, resulting in a markedly altered folding pattern compared to the normal form (see Section~\ref{sec:framework} for details).
This misfolded form aggregates abnormally due to its $\beta$-sheet-rich structures and induces pathogenic effects, ultimately leading to fatal neurodegenerative diseases. Figure~\ref{fig:PrP} shows hamster PrP$^{\rm C}$ (PDB: 1B10) and its misfolded form PrP$^{\rm Sc}$
(PDB ID: 7LNA) \cite{kraus2021high, kraus2021structure}. 

Some multiscale methods have been proposed to capture hierarchical features of protein structures. For example, recent approaches use distance-thresholded graphs with large radial cutoffs, combined with surface modeling, to encode multiscale information~\cite{somnath2021multi, zhang2024multi}. While large-cutoff, distance-thresholded graphs can capture broader structural context, they often introduce significant computational overhead—both in runtime and memory footprint (see Section~\ref{sec:experiments} for empirical evidence)—and may overlook biologically meaningful relationships identified by domain experts. These limitations underscore the \emph{urgent need for a scalable framework that integrates domain knowledge to model intricate protein interactions efficiently while preserving expressive power}.


\begin{figure*}[t!]
    \centering
\def\intrascale{0.6}
\def\interscale{0.5}

\begin{tikzpicture}[
  residue/.style={draw, circle, fill=white, minimum size=\nodescale*6mm, font=\scriptsize, inner sep=.4pt},
  unitnode/.style={draw, circle, minimum size=\nodescale*7mm, font=\scriptsize, inner sep=.4pt},
  gnn/.style={draw, rounded corners, fill=cyan!20, minimum width=1.3cm, minimum height=0.9cm, font=\normalsize},
  task/.style={draw, rounded corners, fill=lime!25, minimum width=1.4cm, minimum height=0.9cm, font=\normalsize},
  geom/.style={draw, thick},
  edge/.style={-latex, thick},
  s1/.style={residue, fill=lightcoral!30},
  s2/.style={residue, fill=navyblue!20},
  s3/.style={residue, fill=burlywood!30},
  s4/.style={residue, fill=lightgreen!30},
  every node/.style={scale=\nodescale}
]

\def\nodescale{0.8}
\begin{scope}[shift={(-.7,-0.5)}]
\path[draw, rounded corners, thick, gray!40, dashed] (-1.2, -3) rectangle (3.5, .8);
\foreach \i/\x/\y in {
  1/0.1/0.5, 2/0.8/0.3, 3/1.6/0.6, 4/2.4/0.3, 5/3.0/-0.2,
  6/2.3/-0.7, 7/1.5/-1.0, 8/0.7/-0.8, 9/-0.1/-0.9,
  10/-0.75/-1.3, 11/-0.4/-1.9,
  12/0.3/-1.7, 13/1.2/-1.8, 14/2.2/-1.6
} {
  \coordinate (p\i) at (\x, \y-0.3);
}
\foreach \i/\j in {1/2,2/3,3/4,4/5,5/6,6/7,7/8,8/9,9/10,10/11,11/12,12/13,13/14} {
  \draw[gray, thick] (p\i) -- (p\j);
}
\foreach \i in {1,...,5} { \node[s1] at (p\i) {}; }
\foreach \i in {6,...,9} { \node[s2] at (p\i) {}; }
\foreach \i in {10,11} { \node[s3] at (p\i) {}; }
\foreach \i in {12,13,14} { \node[s4] at (p\i) {}; }
\node at (1.4, -2.65) {\textbf{Folded Protein}};

\node (a1) at (3.4, 0.) {};
\node (b1) at (4.15, 0.) {};
\draw[gray, thick, edge] (a1) -- (b1);

\node (a2) at (3.4, -2.2) {};
\node (b2) at (4.15, -2.2) {};
\draw[gray, thick, edge] (a2) -- (b2);

\end{scope}

\def\nodescale{0.7}
\begin{scope}[shift={(4.4, 0.2)}]
\path[draw, rounded corners, thick, gray!40, dashed] (-1.1, -2.05) rectangle (2.4, 1.0);
\foreach \i/\x/\y in {
  1/0.1/0.5, 2/0.8/0.3, 3/1.6/0.6, 4/2.4/0.3, 5/3.0/-0.2,
  6/2.3/-0.7, 7/1.5/-1.0, 8/0.7/-0.8, 9/-0.1/-0.9,
  10/-0.75/-1.3, 11/-0.4/-1.9,
  12/0.3/-1.7, 13/1.2/-1.8, 14/2.2/-1.6
} {
  \coordinate (q\i) at ($({\intrascale*\x},{\intrascale*\y})$);
}
\foreach \i/\j in {1/2,1/3,1/4,2/3,2/4,3/4,5/2,5/3,5/4} {\draw[geom] (q\i) -- (q\j);}
\foreach \i/\j in {6/7,6/8,7/8,7/9,8/9} {\draw[geom] (q\i) -- (q\j);}
\draw[geom] (q10) -- (q11);
\foreach \i/\j in {12/13,12/14,13/14} {\draw[geom] (q\i) -- (q\j);}
\foreach \i in {1,...,5} { \node[s1] at (q\i) {}; }
\foreach \i in {6,...,9} { \node[s2] at (q\i) {}; }
\foreach \i in {10,11} { \node[s3] at (q\i) {}; }
\foreach \i in {12,13,14} { \node[s4] at (q\i) {}; }
\node at (.65, -2.5*\intrascale - 0.2) {\textbf{Motif Subgraphs}};
\end{scope}

\begin{scope}[shift={(8.2, 0)}]

\node[task] (task0) at (0, .6) {Residue Embedding};
\node[gnn] (g1) at (0, -0.6) {1st-Stage GNN};
\node[task] (task) at (0, -1.6) {Motif Embedding};

\draw[edge] (task0) -- (g1);
\draw[thick] (-1.4, 0) -- (0, 0);
\draw[edge] (g1) -- (task);
\draw[edge] (task) -- (0, -2.5);
\draw[thick]  (-1.4, -2.2) -- (0, -2.2);
\end{scope}

\begin{scope}[shift={(4.5, -2.8)}]
\path[draw, rounded corners, thick, gray!40, dashed] (-1.2, -1.5) rectangle (2.3, .8);
\coordinate (c1) at ($({\interscale*1.58},{\interscale*0.30})$);
\coordinate (c2) at ($({\interscale*1.10},{\interscale*-0.85})$);
\coordinate (c3) at ($({\interscale*-0.575},{\interscale*-1.60})$);
\coordinate (c4) at ($({\interscale*1.23},{\interscale*-1.70})$);
\node[unitnode, fill = lightcoral!30] (s1) at ($(c1)+(.2, .1)$) {};
\node[unitnode, fill =  navyblue!20] (s2) at ($(c2)+(.2, .1)$) {};
\node[unitnode, fill = burlywood!30] (s3) at ($(c3)+(.2, .1)$) {};
\node[unitnode, fill = lightgreen!30] (s4) at ($(c4)+(.2, .1)$) {};
\foreach \i/\j in {s1/s2, s1/s3, s1/s4, s2/s3, s2/s4, s3/s4} {\draw[thick] (\i) -- (\j);}
\node at (.6, -1.2) {\textbf{Inter-Structural Graph}};

\node[unitnode, fill =  navyblue!20] (s2) at ($(c2)+(.2, .1)$) {};

\node[gnn] (intergnn) at (3.7, 0) {2nd-Stage GNN};
\node[task] (out) at (3.7, -1) {Global Features};
\draw[edge] (intergnn) -- (out);
\end{scope}
\end{tikzpicture}
    \vspace{-0.1cm}
    \caption{\footnotesize
Overview of the proposed multiscale graph-based framework. We first construct a hierarchical graph representation that includes: (1) fine-grained motif subgraphs, where residues within each secondary structure motif (e.g., $\alpha$-helices, $\beta$-strands, loops) are treated as nodes, and (2) a coarse-grained structural graph, where each motif is abstracted as a single node.
The first GNN operates independently on each motif subgraph to learn local embeddings. These learned motif-level features are then used to construct the coarse-grained graph, on which a second GNN performs message passing to model higher-level structure and generate the final prediction.}
    \label{fig:framework}
    \vspace{-0.3cm}
\end{figure*}
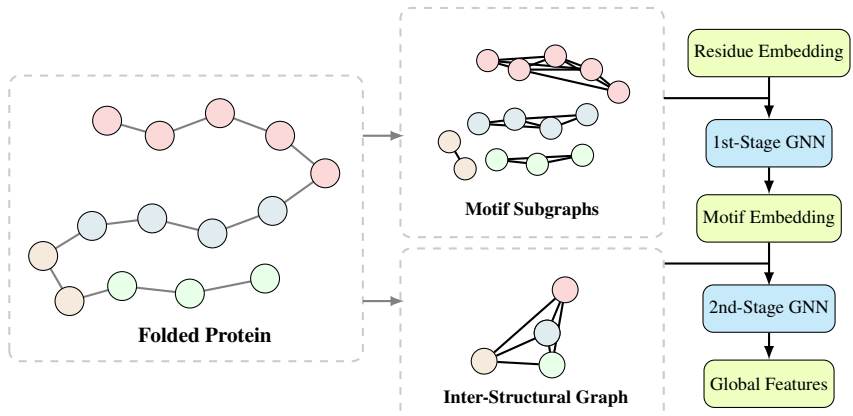
\vspace{-0.2cm}

\vspace{-0.2cm}
\subsection{Our Contributions}
\vspace{-0.2cm}
In this paper, we propose a new multiscale GNN framework---depicted in Fig.~\ref{fig:framework} with detailed discussion in Sections~\ref{sec:framework} and \ref{sec:model}---for learning proteins. Our key contributions are:
\vspace{-0.15cm}
\begin{itemize}[leftmargin=*]
\item 
We introduce a hierarchical, sparse, and geometry-aware graph representation of proteins by combining domain-expert algorithms to segment sequences into secondary structure motifs and constructing a multi-scale graph hierarchy: a collection of fine-grained graphs that capture residue-level interactions within each secondary structure unit, and a single coarse-grained graph that models the spatial arrangement and relative orientation among these units. This representation preserves geometric fidelity while providing a provable sparsity bound on the total number of edges, which is critical for scalability and efficiency. See Section~\ref{sec:framework} for details.


\item We develop a two-stage 
framework---leveraging two off-the-shelf 
GNNs operating in tandem
to learn multiscale protein features based on the proposed hierarchical graph representation. 
Theoretically, we characterize the maximal expressiveness of our framework, showing its ability to maintain spatial fidelity during message passing across different 
levels. See Section~\ref{sec:model} for details. 

\item Empirically, we demonstrate that our multiscale framework enables existing GNN architectures to simultaneously improve prediction accuracy and computational efficiency (in both runtime and memory footprint) across benchmark tasks. See details in Section~\ref{sec:experiments}. 

\end{itemize}


\vspace{-0.3cm}
\subsection{Related Works}
\vspace{-0.2cm}

\noindent{\bf Protein Representation Learning.}
A variety of deep learning approaches have been developed to model protein structures and functions by learning effective representations. Some methods leverage the sequential nature of proteins and employ convolutional neural networks (CNNs)~\citep{hou2018deepsf} or large language models (LLMs)~\citep{xu2023protst, wang2024protchatgpt} to learn directly from the amino acid sequence. However, since protein function is closely tied to its 3D structure, many recent efforts have shifted toward structure-based approaches. These methods represent proteins as graphs and use GNNs to capture spatial relationships. Notable examples of methods that learn geometric and symmetry-aware representations include~\citep{jing2020learning, jing2021equivariant, zhang2022protein, wang2022learning, li2022directed, wangtheoretically}.
In addition, hybrid models such as DeepFRI~\citep{gligorijevic2021structure} combine GNNs with sequence-level features extracted from pretrained protein language models, while ProtGO~\citep{hu2024protgo} integrates GNNs with a Gene Ontology encoder, achieving strong results with large-scale architectures. In contrast, our work focuses on 3D structure-based modeling and introduces an efficient, theoretically grounded hierarchical design that learns protein representation with substantially smaller model sizes.

\noindent{\bf Multiscale Graph-Based Models for Protein Representation Learning.}
Several recent efforts~\citep{hermosilla2020intrinsic, somnath2021multi, zhang2024multi, quan2024clustering} have explored multiscale GNN 
to better capture both local and global structural patterns. The methods in~\citep{somnath2021multi, zhang2024multi} construct large-radius radial graphs combined with surface modeling, while~\citep{quan2024clustering} learns residue-level clustering for hierarchical representations. However, such approaches often incur high computational costs or rely on data-driven clustering without explicit biological grounding.
In contrast,~\citep{hermosilla2020intrinsic} incorporates domain principles 
by applying hierarchical pooling based on different types of residue interactions to extract multiscale features. However, their final pooling stages simplify the backbone chain by clustering every two consecutive residues along the sequence, without explicitly leveraging secondary structure motifs. As a result, capturing long-range dependencies (LRDs) requires repeated pooling over many layers, leading to high computational costs. 
Our work shares the multiscale motivation but differs by introducing a biologically grounded hierarchical graph construction inspired by prior molecular motif-based approaches~\citep{yu2022molecular, wu2023molformer}. Specifically, we use secondary structures as high-level motifs to build a two-level graph that captures both local geometry and long-range dependencies. This design offers provable sparsity and expressiveness guarantees while maintaining high computational efficiency.



\vspace{-0.3cm}
\subsection{Organization}
\vspace{-0.2cm}
We organize this paper as follows: We recap on message-passing GNNs and the standard framework to analyze their expressiveness power, together with other necessary background materials in Section~\ref{sec:background}. We present our new hierarchical graph representations for proteins and two-stage GNN architectures in Section~\ref{sec:framework} and Section~\ref{sec:model}, respectively. We numerically validate the accuracy and efficiency of our proposed approach in Section~\ref{sec:experiments}. Technical proofs and additional experimental details are provided in the appendix.

\vspace{-0.3cm}
\section{Background}
\label{sec:background}
\vspace{-0.2cm}
In this section, we provide background materials on point clouds, geometric graphs, local frames, and message-passing GNNs and their expressiveness characterizations.


\textbf{Point Clouds and Geometric Graphs.}  
A \emph{3D point cloud} is a collection of points in 
$\mathbb{R}^3$, i.e., 
$\{\vx_i\} \subset \mathbb{R}^3$. Each point may 
have a \emph{feature vector} $\vf_i \in \mathbb{R}^d$, capturing additional attributes beyond the spatial coordinates. We denote such an attributed point cloud as $\{\vx_i, \vf_i\}_{i=1}^N$.
Two point clouds $\{\vx_i, \vf_i\}_{i=1}^N$ and $\{\tilde{\vx}_i, \tilde{\vf}_i\}_{i=1}^N$ are considered \emph{identical up to rigid motions} if there exists a bijection $\sigma: \{1, \dots, N\} \to \{1, \dots, N\}$ and a rigid motion $g$ such that $\vf_{\sigma(i)} = \tilde{\vf}_i$ and $\vx_{\sigma(i)} = g \cdot \tilde{\vx}_i$ for all $i$.

Extending this concept, a \emph{geometric graph} $\gG = (\gV, \gE, \mF)$ introduces a graph structure over the point cloud to model geometric relationships through edges. Here, $\gV$ is the set of nodes, $\gE$ the set of edges, and $\mF = [\vf_1, \dots, \vf_n]$ is the matrix of node features, which may also encode geometric attributes. When edges are equipped with features, $e_{ij} \in \gE$ denotes both the edge and its associated attributes.


\textbf{Message Passing 
GNNs.} 
{Consider a (geometric) graph $\gG = (\gV, \gE, \mF)$.
Starting from $\vf^{(0)}_i= \vf_i$, message passing GNNs propagate features from iteration $t$ to $t+1$
as follows:
\begin{equation}\label{eq:equivariant-MPNN}
\begin{aligned}
\vf^{(t+1)}_i &= \operatorname{UPD}
\big(\vf^{(t)}_i, \operatorname{AGG}
(  \{\!\!\{  
\vf^{(t)}_i, \vf^{(t)}_j, e_{ij}\mid j\in\gN_i   \}\!\!\}  
)
\big), \\
\vf &= \operatorname{readout} ( \{\!\!\{ \vf_i^{(T)} \mid i \in \gV \}\!\!\} ),
\end{aligned}
\end{equation}
where 
$e_{ij}$ represents the attribute of edge $(i, j)$,
\(\gN_i\) denotes the neighborhood of node \(i\), consisting of nodes in \(\gV\) directly connected to \(i\) by an edge in \(\gE\),
$\{\!\!\{ \cdot \}\!\!\}$ denotes a multiset,
and $\operatorname{UPD}, \operatorname{AGG}$, and $\operatorname{readout}$ are learnable functions, parameterized by multilayer perceptions.

\textbf{Maximal Expressive GNNs.} 
The expressiveness of GNNs is often analyzed through the lens of the Weisfeiler-Lehman (WL) graph isomorphism test \cite{xu2018powerful, morris2019weisfeiler}, which provides a theoretical foundation for distinguishing non-isomorphic graphs. A GNN is said to be \emph{maximally expressive} if its key components, i.e., $\operatorname{UPD}, \operatorname{AGG}$, and $\operatorname{readout}$, are injective \citep{joshi2023expressive, wang2024rethinking}. This notion can be viewed as pushing a given GNN architecture to its theoretical expressive limit under ideal conditions. Under this assumption, we ask whether a given GNN architecture can distinguish all non-isomorphic graph structures; that is, whether a maximally expressive GNN can produce distinct readout features for non-isomorphic graphs, given a sufficient number of layers $T$.

This theoretical framework has been extended to geometric graphs, particularly those derived from point clouds, where node and edge features encode spatial or geometric information. A recent study \citep{wangtheoretically} investigates whether maximally expressive GNNs can distinguish point clouds—up to rigid motions—by operating on their corresponding SCHull graphs (see Appendix~\ref{appendix:SCHull} for a review), specific geometric graphs constructed from these point clouds. 
Since our framework adopts the SCHull graph as its underlying representation, we summarize the relevant expressiveness result below:
\begin{restatable}{theorem}{thmexpressive}\cite{wangtheoretically}
\label{thm:expressive}
Let $F$ be a maximally expressive GNN with depth $T = 1$. Then $F$ can distinguish between the attributed SCHull graphs of any two non-isomorphic generic point clouds. 
\end{restatable}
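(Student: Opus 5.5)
The plan is to show that a single round of maximally expressive message passing on the attributed SCHull graph already encodes enough rigid-motion-invariant information to reconstruct the point cloud up to rigid motion. I recall the construction from Appendix~\ref{appendix:SCHull}: the points are centered at their centroid, projected radially onto a sphere, and the edges are those of the convex hull of the projected points. Each edge $e_{ij}$ carries invariant geometric attributes, namely the radial norms $\|\vx_i\|$ and $\|\vx_j\|$, the angle between $\vx_i$ and $\vx_j$, and a local-frame feature that fixes the orientation of each neighbor relative to a canonical frame at node $i$. For \emph{generic} point clouds I will assume the following nondegeneracy conditions: no four projected points are cospherical-coplanar, all points are distinct from the centroid, and all pairwise distances and angles are distinct. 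Under these conditions the convex hull is a simplicial triangulation of the sphere, and the local frames are well defined.

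\textbf{Step 1: reconstruct each 1-hop neighborhood.} Since $\operatorname{AGG}$ and $\operatorname{UPD}$ are injective, the feature $\vf^{(1)}_i$ determines the multiset $\{\!\!\{ (\vf_i,\vf_j,e_{ij}) : j\in\gN_i \}\!\!\}$. I will argue that this multiset determines node $i$ and its neighbors $\{\vx_j\}_{j\in\gN_i}$ in the canonical frame at $i$, and hence up to a single rigid motion fixing the origin. The argument uses the norms, the angles to $\vx_i$, and the frame orientation or dihedral information to remove reflection ambiguity.

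\textbf{Step 2: glue the local stars.} Because $\operatorname{readout}$ is injective, the multiset $\{\!\!\{\vf^{(1)}_i\}\!\!\}$ determines the full collection of local stars. Two adjacent stars share an edge, and genericity, via the distinct attribute values, identifies which $j$ in star $i$ corresponds to which star. Each shared edge, together with a third common vertex from an adjacent triangle of the hull, fixes the relative rigid motion between the two stars' frames. The SCHull graph is connected, since it is the 1-skeleton of a convex polytope, so propagating these alignments along a spanning tree places every point in one common frame. This recovers $\{\vx_i,\vf_i\}$ up to a global rigid motion and a relabeling.

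\textbf{Step 3: conclude.} Take two non-isomorphic generic point clouds, meaning they are not identical up to rigid motion. If their readouts were equal, Step 2 would reconstruct the same point cloud from both, which is a contradiction. Hence $F$ distinguishes them.

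The main obstacle is Step 2. Equality of readouts only gives equality of the multisets of stars, with no labels, so I must show the gluing is uniquely determined. I will first try to prove that genericity makes every star's feature unique. Then the adjacency between stars can be read off from shared edge attributes: the edge $(i,j)$ appears in both star $i$ and star $j$ with matching norms and angle. This should make the reconstruction canonical.
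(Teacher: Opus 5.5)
Your Step~1 relies on an edge attribute that the SCHull graph does not have. The paper gives no proof of this statement; it is quoted from \cite{wangtheoretically}, so your sketch has to stand on its own. By Appendix~\ref{appendix:SCHull}, the only geometric attributes are the node scalar $\|\vx_i-\overline{\vx}\|$ and the edge pair $(\|\vx_i-\vx_j\|,\tau_{ij})$, where $\tau_{ij}$ is a single scalar attached to the edge. Nothing locates the neighbors of $i$ in a canonical frame at $i$.

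The depth-one feature $\vf_i^{(1)}$ therefore encodes only the multiset of per-neighbor tuples, and $\operatorname{AGG}$ never sees a neighbor--neighbor quantity. The two distances in each tuple fix the triangle $(\overline{\vx},\vx_i,\vx_j)$. That confines $\vx_j$ to a circle about the axis through $\overline{\vx}$ and $\vx_i$, but leaves each neighbor free to rotate about that axis independently of the others. You give no argument that the scalars $\tau_{ij}$ pin down the neighbors' relative azimuths and cyclic order from the multiset at $i$ alone, and that would be a nontrivial claim. So Step~1, recovering a star up to a single rigid motion, is unjustified, and Step~2 has no rigid pieces to glue.

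The missing idea is to reconstruct globally rather than star by star; your Step~2 already contains the key ingredient. For generic clouds the radii $\|\vx_i-\overline{\vx}\|$ are pairwise distinct and serve as labels. Equal readouts then give a bijection of nodes that matches radii and features. Because each $\vf_i^{(1)}$ records its neighbors' radii, the same bijection also matches edges, edge lengths and $\tau_{ij}$. Hence the output determines the labeled SCHull graph together with all edge lengths of its cone with apex $\overline{\vx}$.

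What you then need is an actual rigidity argument for that framework. One way to do it:
\begin{enumerate}
\item Read off the hull faces combinatorially: they are the non-separating triangles of the 3-connected planar triangulation.
\item Each face $(i,j,k)$ yields a tetrahedron $(\overline{\vx},\vx_i,\vx_j,\vx_k)$, fixed up to isometry by its six lengths.
\item For adjacent faces $(i,j,k)$ and $(i,j,l)$, the two tetrahedra share the triangle $(\overline{\vx},\vx_i,\vx_j)$, so $\vx_l$ is determined up to reflection in that triangle's plane.
\item The reflection is forced as follows. Since $\sum_i(\vx_i-\overline{\vx})=0$, the center generically lies in the interior of the hull of the projected points. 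The cones from the center over the hull faces therefore form a fan, which puts $\vx_k$ and $\vx_l$ on opposite sides of that plane.
\item Propagating across the connected dual graph recovers the labeled cloud up to isometry.
\end{enumerate}
Some argument of this kind is indispensable. Without it, the proposal does not prove the theorem.
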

\begin{remark}
The genericity of point clouds refers to the condition that the point coordinates are algebraically independent over the field of rational numbers—i.e., they do not satisfy any nontrivial polynomial equation with rational coefficients. This condition holds for most protein structures encountered in practice. Therefore, we assume genericity for all structures studied in this work.
\end{remark}
}


\textbf{Local Frames.}
A \emph{local frame} is an orthogonal matrix $g \in {\rm{O}}(3)$, consisting of three orthonormal vectors that define a local 3D coordinate system. Notably, local frames are \emph{equivariant} under rotations and reflections: when a rotation or reflection is applied to an object’s coordinates, the associated local frame transforms accordingly. Formally, if $g(\mX)$ denotes the local frame computed from a matrix of coordinates $\mX$, then for any $h \in {\rm{O}}(3)$, we have $g(h \cdot \mX) = h \cdot g(\mX)$.
In molecular and structural modeling, spatial units such as functional groups are often associated with such local frames to capture their orientations~\citep{du2022se, du2024new}. Given two spatial units with associated local frames $g_i$ and $g_j$, the product $g_i^\top g_j$ represents the rotation (or reflection) that aligns one frame with the other, thereby encoding their relative orientation~\citep{du2024new}. This formulation provides a principled way to compare geometric configurations and serves a role analogous to transition maps in differential geometry, enabling accurate geometric information to pass across local reference systems.

\vspace{-0.3cm}
\section{
Hierarchical Graph Representations for Proteins
}
\label{sec:framework}
\vspace{-0.2cm}
In this section, we present a new multiscale hierarchical graph representation for proteins. We begin by discussing the rationale for hierarchical graph construction, designed to capture protein structures at multiple levels of granularity. Next, we describe the graph construction process, detailing how it represents structural information within and across secondary structures and preserves critical spatial features at each level.

\vspace{-0.3cm}
\subsection{Protein Hierarchical Structures}
\vspace{-0.2cm}
Proteins exhibit an inherently hierarchical organization, structured across multiple scales. At the most fundamental level, they can be considered as linear sequences of amino acids, known as the primary structure. Current trends in graph-based protein modeling leverage this residue-level information by representing each amino acid as a node, with edges capturing chemical bonds and spatial proximities. This graph-based representation effectively models local interactions, reflecting the biological principles underlying protein folding.

However, the primary structure is only the first step in the hierarchical organization of proteins. As folding progresses, \emph{contiguous sequences} of residues self-assemble into more complex geometric motifs—such as $\alpha$-helices and $\beta$-strands—forming the secondary structure. These recurring geometric patterns are stabilized through hydrogen bonding and are essential not only for maintaining the overall structure of the protein but also for determining its functions and interactions with other molecular units. This multiscale complexity underscores the necessity for a hierarchical modeling approach that can effectively capture both local and global structural features, as well as LRDs inherent in protein folding. 



\begin{figure}[t]
\begin{tikzpicture}[
    residue/.style={rectangle, draw, minimum width=0.6cm, minimum height=0.6cm, font=\small},
    token/.style={rectangle, draw, minimum width=0.6cm, minimum height=0.4cm, font=\small},
    subseq/.style={rectangle, draw, minimum height=0.8cm, font=\small},
    arrow/.style={-Stealth, thick},
    brace/.style={decoration={brace, mirror, amplitude=5pt}, decorate}
]

    \node[residue] (v1) at (0,0) {$v_1$};

    \foreach \i in {2,...,14} {
        \pgfmathtruncatemacro{\iprev}{\i-1}
        \node[residue, right=0cm of v\iprev] (v\i) {$v_{\i}$};
    }
\node[left=0.5cm of v1, font=\small] {Protein Sequence};

    \foreach \i in {1,...,5} {
        \node[residue, below=0.8cm of v\i, fill = lightcoral!20] (u\i) {$s_{\i}$};
    }

    \foreach \i in {6,...,9} {
        \node[residue, below=0.8cm of v\i, fill = navyblue!20] (u\i) {$s_{\i}$};
    }

    \foreach \i in {10, ..., 11} {
        \node[residue, below=0.8cm of v\i, fill = burlywood!20] (u\i) {$s_{\i}$};
    }

    \foreach \i in {12,...,14} {
        \node[residue, below=0.8cm of v\i, fill = lightgreen!30] (u\i) {$s_{\i}$};
    }
\node[left=0.5cm of u1, font=\small] {SS Tokens};

\draw[->, thick, shorten >=4pt, shorten <=4pt] (v7.south east) -- (u7.north east) node[midway, right=2pt] {\scriptsize DSSP};

    \foreach \i in {1,...,5} {
        \node[residue, below=0.8cm of u\i, xshift=-0.3cm, fill = lightcoral!20] (w\i) {$v_{\i}$};
    }

    \foreach \i in {6,...,9} {
        \node[residue, below=0.8cm of u\i, xshift=-0.1cm, fill = navyblue!20] (w\i) {$v_{\i}$};
    }

    \foreach \i in {10, ..., 11} {
        \node[residue, below=0.8cm of u\i, xshift=0.1cm, fill = burlywood!20] (w\i) {$v_{\i}$};
    }

    \foreach \i in {12,...,14} {
        \node[residue, below=0.8cm of u\i, xshift=0.3cm, fill = lightgreen!30] (w\i) {$v_{\i}$};
    }
\draw[->, thick, shorten >=4pt, shorten <=4pt] (u7.south east) -- ($(u7.south east)+(0, -0.8cm)$) node[midway, right=2pt] {\scriptsize Segmentation};

\draw [decorate,decoration={brace, mirror, amplitude=6pt},yshift=-1.2cm, draw = none]
(w1.south west) -- (w5.south east) node[midway,below=0pt] {$S_1$};
\draw [decorate,decoration={brace, mirror, amplitude=6pt},yshift=-1.2cm, draw = none]
(w6.south west) -- (w9.south east) node[midway,below=0pt] {$S_2$};
\draw [decorate,decoration={brace, mirror, amplitude=6pt},yshift=-1.2cm, draw = none]
(w10.south west) -- (w11.south east) node[midway,below=0pt] {$S_3$};
\draw [decorate,decoration={brace, mirror, amplitude=6pt},yshift=-1.2cm, draw = none]
(w12.south west) -- (w14.south east) node[midway,below=0pt] {$S_4$};

\end{tikzpicture}\vspace{-0.2cm}
\caption{
\footnotesize
A visual illustration of the identification and segmentation process for protein secondary structures. Each residue \(v_k\) is assigned a secondary structure (SS) token \(s_k\) by DSSP, and consecutive residues with the same token are grouped into subsequences $S_i$. 
}
\label{fig:ss_segmentation}\vspace{-0.3cm}
\end{figure}
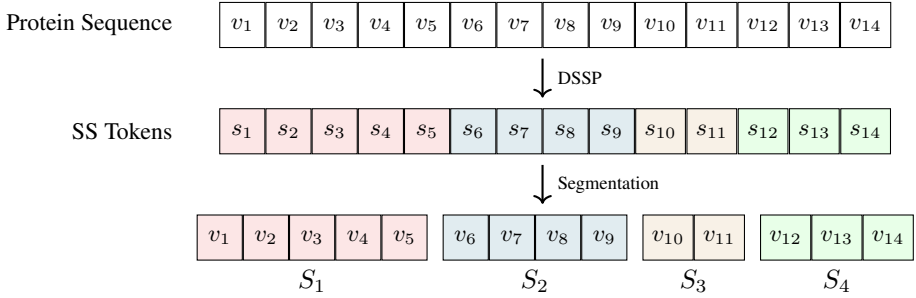

\begin{wraptable}{r}{0.36\columnwidth}
\vspace{-.4cm}
\scriptsize
\centering
\begin{tabular}{|c|c|}
\hline
\textbf{Token} & \textbf{Secondary Structure} \\ \hline
`H' & $\alpha$-helix \\ \hline
`B' & Isolated $\beta$-bridge \\ \hline
`E' & Strand (all other $\beta$-ladder residues) \\ \hline
`G' & $3_{10}$-helix \\ \hline
`I' & $\pi$-helix \\ \hline
`P' & $\kappa$-helix (poly-proline II helix) \\ \hline
`T' & Turn \\ \hline
`S' & Bend \\ \hline
`-' & None \\ \hline
\end{tabular}\vspace{-0.2cm}
\caption{\small Secondary structure tokens and their corresponding types.}
\label{table:secondary-structure}\vspace{-0.3cm}
\end{wraptable}

\vspace{-0.3cm}
\subsection{Identification and Segmentation of Protein Secondary Structures}
\vspace{-0.2cm}
The identification of secondary structures is a critical step in protein modeling, providing insights into the folding patterns and spatial organization of amino acid sequences. Various methods have been developed for this purpose, especially the DSSP (Define Secondary Structure of Proteins) algorithm \citep{DSSPoriginal, DSSPother} being one of the most widely used due to its robustness and accuracy. DSSP analyzes a protein’s backbone conformation by first identifying backbone-backbone hydrogen bonds (H-bonds) based on geometric criteria and hydrogen-bond energy calculations. It then uses these H-bonds to detect structural motifs, including turns, bridges, \(\alpha\)-helices, and \(\beta\)-sheets. A detailed description of the DSSP algorithm is provided in Appendix~\ref{appendix:DSSP}.

For a protein with amino acid sequence represented as \(\{v_k\}_{k=1}^N\), where each \(v_k\) denotes a residue, DSSP assigns a secondary structure token \(s_k\) to each residue, indicating its structural type (e.g., \(\alpha\)-helix, \(\beta\)-strand, or none). Table~\ref{table:secondary-structure} summarizes the complete list of secondary structure tokens and their corresponding motifs. 
This process yields an annotated sequence \(\{(v_k, s_k)\}_{k=1}^N\), where each residue is paired with its secondary structure token.
We then use this annotation to segment the sequence into a set of structurally coherent subsequences, denoted by \(\{S_i \coloneqq \{v_k\}_{k=n_i}^{n_{i+1}-1}\}_{i=1}^I\). Each subsequence \(S_i\) consists of consecutive residues that share the same token of the secondary structure \(s_k\). Figure~\ref{fig:ss_segmentation} illustrates the process of identifying and segmenting the protein sequence based on secondary structure annotations.

The resulting set of subsequences \(\{S_i\}_{i=1}^I\), grouping residues by their secondary structure types, forms the foundation for constructing the hierarchical graphs. 
Each subsequence---representing a distinct secondary structural motif---will serve as a node in the higher-level graph representation.

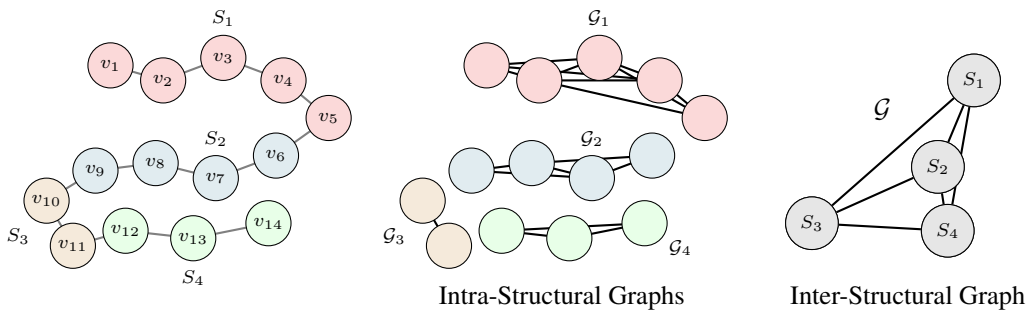
\begin{figure}[t]
\centering
\begin{tikzpicture}[
    residue/.style={draw, circle, fill=white, minimum size=6mm, font=\scriptsize, inner sep=.4pt},
    unitnode/.style={draw, circle, fill=gray!20, minimum size=7mm, font=\scriptsize, inner sep=.4pt},
    edge/.style={-latex, thick},
    geom/.style={draw, thick},
    s1/.style={residue, fill=lightcoral!30},
    s2/.style={residue, fill=navyblue!20},
    s3/.style={residue, fill=burlywood!30},
    s4/.style={residue, fill=lightgreen!30},
    every node/.style={inner sep=.4pt}
  intra/.style={geom, opacity=0},
  intranode/.style={fill opacity=0, draw opacity=0}
]

\begin{scope}[shift={(0,0)}]

\coordinate (p1) at (0.1, 0.5);
\coordinate (p2) at (0.8,0.3);
\coordinate (p3) at (1.6,0.6);
\coordinate (p4) at (2.4,0.3);
\coordinate (p5) at (3.0,-0.2);
\coordinate (p6) at (2.3,-0.7);
\coordinate (p7) at (1.5,-1.0);
\coordinate (p8) at (0.7,-0.8);
\coordinate (p9) at (-0.1,-.9);
\coordinate (p10) at (-0.75,-1.3);
\coordinate (p11) at (-0.4,-1.9);
\coordinate (p12) at (0.3,-1.7);
\coordinate (p13) at (1.2,-1.8);
\coordinate (p14) at (2.2,-1.6);

\foreach \i/\j in {1/2,2/3,3/4,4/5,5/6,6/7,7/8,8/9,9/10,10/11,11/12,12/13,13/14} {
  \draw[gray, thick] (p\i) -- (p\j);
}

\node[s1] at (p1)  {$v_1$};
\node[s1] at (p2)  {$v_2$};
\node[s1] at (p3)  {$v_3$};
\node[s1] at (p4)  {$v_4$};
\node[s1] at (p5)  {$v_5$};

\node[s2] at (p6)  {$v_6$};
\node[s2] at (p7)  {$v_7$};
\node[s2] at (p8)  {$v_8$};
\node[s2] at (p9)  {$v_9$};

\node[s3] at (p10) {$v_{10}$};
\node[s3] at (p11) {$v_{11}$};

\node[s4] at (p12) {$v_{12}$};
\node[s4] at (p13) {$v_{13}$};
\node[s4] at (p14) {$v_{14}$};

\node[above=0.3cm of p3] {\scriptsize $S_1$};
\node[above=0.3cm of p7] {\scriptsize $S_2$};
\node[below left=0.25cm and 0.1cm of p10] {\scriptsize $S_3$};
\node[below=0.3cm of p13] {\scriptsize $S_4$};

\end{scope}

\begin{scope}[shift={(5,0)}]
\coordinate (p1) at (0.1, 0.5);
\coordinate (p2) at (0.8,0.3);
\coordinate (p3) at (1.6,0.6);
\coordinate (p4) at (2.4,0.3);
\coordinate (p5) at (3.0,-0.2);
\coordinate (p6) at (2.3,-0.7);
\coordinate (p7) at (1.5,-1.0);
\coordinate (p8) at (0.7,-0.8);
\coordinate (p9) at (-0.1,-.9);
\coordinate (p10) at (-0.75,-1.3);
\coordinate (p11) at (-0.4,-1.9);
\coordinate (p12) at (0.3,-1.7);
\coordinate (p13) at (1.2,-1.8);
\coordinate (p14) at (2.2,-1.6);

\draw[geom] ($(p1)$) -- ($(p2)$);
\draw[geom] ($(p1)$) -- ($(p3)$);
\draw[geom] ($(p1)$) -- ($(p4)$);

\draw[geom] ($(p2)$) -- ($(p3)$);
\draw[geom] ($(p2)$) -- ($(p4)$);

\draw[geom] ($(p3)$) -- ($(p4)$);

\draw[geom] ($(p5)$) -- ($(p2)$);
\draw[geom] ($(p5)$) -- ($(p3)$);
\draw[geom] ($(p5)$) -- ($(p4)$);

\foreach \i in {1,...,5} {
  \node[s1] at ($(p\i)$) {};
}

\draw[geom] ($(p6)$) -- ($(p7)$);
\draw[geom] ($(p6)$) -- ($(p8)$);
\draw[geom] ($(p7)$) -- ($(p8)$);
\draw[geom] ($(p7)$) -- ($(p9)$);
\draw[geom] ($(p8)$) -- ($(p9)$);

\foreach \i in {6,...,9} {
  \node[s2] at ($(p\i)$) {};
}

\draw[geom] ($(p10)$) -- ($(p11)$);
\node[s3] at ($(p10)$) {};
\node[s3] at ($(p11)$) {};

\draw[geom] ($(p12)$) -- ($(p13)$);
\draw[geom] ($(p12)$) -- ($(p14)$);
\draw[geom] ($(p13)$) -- ($(p14)$);

\foreach \i in {12,13,14} {
  \node[s4] at ($(p\i)$) {};
}

\node[above=0.3cm of p3] {\scriptsize $\gG_1$};
\node[above=0.3cm of p7] {\scriptsize $\gG_2$};
\node[below left=0.25cm and 0.1cm of p10] {\scriptsize $\gG_3$};
\node[below right=0.1cm and 0.2cm of p14] {\scriptsize $\gG_4$};

\node[] at (1.1, -2.6) {
Intra-Structural Graphs};
\end{scope}

\begin{scope}[shift={(10,0)}]
\coordinate (p1) at (0.1, 0.5);
\coordinate (p2) at (0.8,0.3);
\coordinate (p3) at (1.6,0.6);
\coordinate (p4) at (2.4,0.3);
\coordinate (p5) at (3.0,-0.2);
\coordinate (p6) at (2.3,-0.7);
\coordinate (p7) at (1.5,-1.0);
\coordinate (p8) at (0.7,-0.8);
\coordinate (p9) at (-0.1,-.9);
\coordinate (p10) at (-0.75,-1.3);
\coordinate (p11) at (-0.4,-1.9);
\coordinate (p12) at (0.3,-1.7);
\coordinate (p13) at (1.2,-1.8);
\coordinate (p14) at (2.2,-1.6);

\foreach \i in {1,...,5} {
  \node[s1, intranode] at ($(p\i)$) {};
}

\foreach \i in {6,...,9} {
  \node[s2, intranode] at ($(p\i)$) {};
}

\node[s3, intranode] at ($(p10)$) {};
\node[s3, intranode] at ($(p11)$) {};

\foreach \i in {12,13,14} {
  \node[s4, intranode] at ($(p\i)$) {};
}






\coordinate (c1) at (1.58, 0.30);
\coordinate (c2) at (1.10, -0.85);
\coordinate (c3) at (-0.575, -1.60);
\coordinate (c4) at (1.23, -1.70);

\node[unitnode, fill=lightcoral!30] (s1) at (c1) {$S_1$};
\node[unitnode, fill=navyblue!20] (s2) at (c2) {$S_2$};
\node[unitnode, fill=burlywood!30] (s3) at (c3) {$S_3$};
\node[unitnode, fill=lightgreen!30] (s4) at (c4) {$S_4$};

\foreach \i/\j in {s1/s2, s1/s3, s1/s4, s2/s3, s2/s4, s3/s4} {
  \draw[thick] (\i) -- (\j);
}

\node[unitnode, fill=gray!20] (s1) at (c1) {$S_1$};
\node[unitnode, fill=gray!20] (s2) at (c2) {$S_2$};
\node[unitnode, fill=gray!20] (s3) at (c3) {$S_3$};
\node[unitnode, fill=gray!20] (s4) at (c4) {$S_4$};

\node[above left = 0.5cm and 0.5cm of c2] {$\gG$};
\node[] at (0.7, -2.6) {
Inter-Structural Graph};

\end{scope}

\end{tikzpicture}\vspace{-0.35cm}
\caption{\footnotesize
Hierarchical geometric graph construction. 
Left: A synthetic protein-like structure composed of 14 residues \(\{v_k\}_{k=1}^{14}\), grouped into four secondary structure subsequences \(\{S_i\}_{i= 1}^4\). 
Middle: Intra-structural graphs \(\gG_i\) capture local information within each subsequence $S_i$ using SCHull. 
Right: The %
inter-structural graph \(\gG\) is formed by connecting the geometric centers of each \(S_i\), modeling higher-level structural relationships between secondary structural motifs.
}
\label{fig:hierarchical-graph}\vspace{-0.3cm}
\end{figure}

\vspace{-0.3cm}
\subsection{Construction of Hierarchical Geometric Graphs}
\vspace{-0.1cm}

Building upon 
segmenting a protein sequence into secondary structure-based subsequences \(\{S_i\}_{i=1}^I\), we construct a hierarchical geometric graph representation that captures both fine-grained and coarse-grained relationships. Specifically, we construct a collection of \emph{intra-structural graphs}, each modeling residue-level interactions within a secondary structure unit by treating residues as nodes, and a single \emph{inter-structural graph} that captures the spatial organization and relative orientation among these units, treating each unit as a node. This multiscale design enables the framework to preserve detailed geometric features within motifs while capturing 
LRDs across the overall protein structure. An illustrative example of our hierarchical graph construction is provided in Fig.~\ref{fig:hierarchical-graph}. To ensure geometric completeness and computational efficiency, we adopt the SCHull graph construction method~\citep{wangtheoretically} (see also Appendix~\ref{appendix:SCHull}), which constructs sparse yet rigid graphs based on node coordinates.
We now describe how this method is applied at both levels of our hierarchical construction:

\textbf{Intra-Structural Graph \(\gG_i\).} 
For each secondary structure unit \(S_i\), we build an intra-structural graph \(\gG_i\), where nodes represent residues within \(S_i\), and edges are computed based on the 3D coordinates of their $\alpha$-carbon atoms using the SCHull method. Node and edge features include geometric features generated by SCHull, combined with residue-specific attributes (e.g., amino acid type). 

\textbf{Inter-Structural Graph \(\gG\).} 
The inter-structural graph $\gG$ is built by treating each secondary structure unit $S_i$ as a node, with its coordinate defined by the geometric center of its residues. SCHull is again applied to determine edges between these structural units based on the spatial arrangement of their centers. In addition to geometric features on SCHull, we incorporate an additional edge feature, $g_i^\top g_j$, where $g_i = \gF(\gG_i)$ denotes a local frame computed from $\gG_i$. The product $g_i^\top g_j$ captures the relative 3D orientation between two secondary structure units, following the orientation encoding proposed in~\citep{du2024new} (see also Section~\ref{sec:background} for details). This orientation encoding is the key to ensuring the expressiveness guarantee proved in Section~\ref{sec:model}.
The construction of frames is discussed in Appendix~\ref{appendix:frame-construction}.

\begin{remark}
Dihedral angles are widely used to capture relative orientations between adjacent residues~\citep{jumper2021highly} or local structural motifs~\citep{gasteiger2021gemnet, wang2022comenet}. In contrast, our approach leverages local frames to encode relative orientations between secondary structure units. Notably, as shown in~\citep{du2024new}, dihedral angles are inherently contained within the product $g_i^\top g_j$, which encodes richer geometric features.
\end{remark}

Beyond geometric fidelity, our hierarchical construction also ensures strong sparsity, which is crucial for scaling GNNs to long protein sequences. The following proposition provides an upper bound on the total number of edges in the hierarchical graph:
\begin{restatable}{proposition}{propsparsity}\label{prop:sparsity}
Let $N$ be the total number of residues in a protein, denoted by $\{v_k\}_{k=1}^N$, and let $\{S_i\}_{i=1}^I$ represent its segmentation into secondary structure units. For each unit $S_i$, let $\gG_i$ denote the intra-structural graph, and let $\gG$ be the inter-structural graph connecting the structural units. Let $\gE_i$ and $\gE$ denote the sets of edges in $\gG_i$ and $\gG$, respectively. Then the total number of edges in the full hierarchical representation satisfies:
$$
|\gE| + \sum_{i=1}^I |\gE_i| < 3N.
$$
\end{restatable}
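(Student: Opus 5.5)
Our plan is to bound each SCHull graph separately by a linear function of its number of vertices and then sum the bounds. The key fact, recalled from the SCHull construction in Appendix~\ref{appendix:SCHull}, is this. For a point cloud of $n$ points, the points are projected radially from their centroid onto a sphere. The edges are then taken from the convex hull of the projected points, so the SCHull graph is the $1$-skeleton of a convex polytope whose vertices lie on a sphere. This graph is therefore planar, and for $n\ge 3$ Euler's formula $V-E+F=2$ together with $2E\ge 3F$ gives $E\le 3n-6$. For degenerate small cases we bound directly: $n=1$ gives $0$ edges and $n=2$ gives at most $1$ edge. In every case we therefore have $|E|\le 3n-3$, and also $|E|\le 3n-6$ whenever $n\ge 3$.

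The first step is to apply this to each intra-structural graph $\gG_i$, with $n_i=|S_i|$ vertices. Since the $S_i$ partition the residues, $\sum_i n_i = N$. The second step is to apply it to the inter-structural graph $\gG$, which has $I$ vertices. Summing the per-graph bound $3n-3$ gives
\[
|\gE|+\sum_{i=1}^I|\gE_i| \le (3I-3)+\sum_{i=1}^I (3n_i-3) = 3N-3 < 3N .
\]
Here the $-3$ contributed by each intra-structural graph exactly cancels the $3I$ coming from the inter-structural graph. If one prefers to use only the weaker bound $|E|\le 3n-6$ for $n\ge 3$, the same telescoping still works.

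The main obstacle is justifying the per-graph bound $|E|\le 3n-3$ uniformly, including the degenerate cases. If all points are coplanar or collinear, the projected hull is not full-dimensional. Genericity, as in the Remark following Theorem~\ref{thm:expressive}, rules this out for $n\ge 4$, and for $n\le 3$ the complete graph has at most $3\le 3n-3$ edges. One further case needs care: a point coinciding with the centroid, which cannot be projected. Genericity excludes this as well, except when $n=1$, where the graph has no edges anyway.

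We also need that SCHull adds no edges beyond the hull skeleton, for example from sequential backbone connections. If it does add the $n_i-1$ backbone edges within each unit, then the per-graph bound would instead have to come from the planar-graph count alone. We would verify against the appendix definition that the edge set is exactly the hull $1$-skeleton, and so planar, and then conclude.
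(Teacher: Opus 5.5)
Your proof is correct and follows essentially the paper's route: bound each SCHull graph by the edge count of its convex-hull skeleton, then sum using $\sum_i |S_i| = N$. The one difference is that your uniform bound $|E|\le 3n-3$ replaces the paper's case split into $|S_i|=1$, $|S_i|=2$ and $|S_i|\ge 3$. The paper's split gives the sharper total $3N-6$. Your version also covers $I\le 2$, where the paper's stated inter-structural bound $|\gE|\le 3I-6$ does not literally hold, although its conclusion survives thanks to slack in the intra-structural sum.
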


In Section~\ref{sec:experiments}, we report the total number of edges created in our framework, along with the average runtime and memory usage, and compare these metrics against existing methods across benchmarks to highlight the efficiency gains enabled by our sparse hierarchical design.

\section{
A Two-Stage GNN Architecture for Multiscale Protein Modeling}
\label{sec:model}
\vspace{-0.1cm}


Finally, we introduce a two-stage GNN framework that leverages our multiscale graph representation for efficient and expressive protein learning. Figure~\ref{fig:framework} illustrates the overall architecture.

The first-stage GNN operates independently on each intra-structural graph $\gG_i$, where each graph corresponds to a single secondary structure unit. It encodes local geometric and chemical interactions among residues and generates embeddings that summarize each unit’s internal feature. 
These embeddings are then passed to the second-stage GNN, which treats each secondary structure unit as a node in the inter-structural graph $\gG$. This graph captures spatial and functional relationships between structural motifs, enabling the second GNN to model 
LRDs and global features of the protein. Figure~\ref{fig:framework} provides an overview of this multiscale learning framework.


\textbf{Message Passing within Secondary Structure Units.} The first-stage GNN applies message passing to each intra-structural graph $\gG_i$, capturing local geometric and chemical interactions and producing a compressed embedding $\vs_i$ for each secondary structure unit:

\begin{equation}
\begin{aligned}
\vf_k^{(t+1)} &= \operatorname{UPD_1} \big( \vf_k^{(t)}, \operatorname{AGG_1} ( \{\!\!\{ \vf_k^{(t)}, \vf_l^{(t)},\ve_{kl} \mid l \in \gN_k(\gG_i) \}\!\!\} ) \big), \quad \text{for } t = 0, 1, \ldots, T_1 - 1, \\
\vs_i &= \operatorname{readout}_1 ( \{\!\!\{ \vf_k^{(T_1)} \mid k \in \gV(\gG_i) \}\!\!\} ),
\end{aligned}
\label{eq:intra-GNN}
\end{equation}

where $\vf_k^{(0)}$ denotes the initial node feature of residue $v_k$ (e.g., amino acid type) and $\ve_{kl}$ denotes the attribute of edge $(k,l)$ on $\gG_i$, \(\gN_k(\gG_i)\) is the neighborhood of node \(k\) on $\gG_i$, \(\operatorname{UPD_1}\) updates node features, \(\operatorname{AGG_1}\) aggregates neighbor features, and \(\operatorname{readout}_1\) produces the final embedding. 

\textbf{Message Passing across Secondary Structure Units.} The second-stage GNN operates on the inter-structural graph $\gG$, where each node represents a secondary structure unit. Node features are initialized using the embeddings produced by the first-stage GNN. This stage then performs message passing over $\gG$ and ultimately outputs the global feature vector $\vs_{\text{global}}$, which serves as the final output of our framework:

\begin{equation}
\begin{aligned}
\vs_i^{(t+1)} &= \operatorname{UPD_2} \big( \vs_i^{(t)}, \operatorname{AGG_2} \big( \{\!\!\{ (\vs_i^{(t)}, \vs_j^{(t)}, \ve_{ij}) \mid \gG_j \in \gN_{\gG_i} \}\!\!\} \big) \big), \quad \text{for } t = 0, 1, \ldots, T_2 - 1, \\
\vs_{\text{global}} &= \operatorname{readout}_2 ( \{\!\!\{ \vs_i^{(T_2)} \mid i \in \gV(\gG) \}\!\!\} ),
\end{aligned}
\label{eq:inter-GNN}
\end{equation}

where \(\vs_i^{(0)} = s_i\), $\ve_{ij}$ denotes the attribute of edge $(i, j)$ on $\gG$, $\vs_{\text{global}}$ represents the final output feature of our framework.
The functions $\operatorname{UPD_2}$, $\operatorname{AGG_2}$, and $\operatorname{readout}_2$ denote the update, aggregation, and readout operations, respectively. 

%

\textbf{Maximal Expressiveness of Two-Stage GNN Framework.}
We now provide a theoretical characterization of the expressiveness of our proposed multiscale hierarchical learning framework. This analysis builds on the notion of maximal expressiveness introduced in Section~\ref{sec:background} (see also Theorem~\ref{thm:expressive}), a standard approach for evaluating the expressiveness of GNNs~\citep{xu2018powerful, morris2019weisfeiler, joshi2023expressive}. To formalize maximal expressiveness in our setting, we begin with the following assumption:
\begin{assumption}\label{assumption}
    \(\operatorname{UPD_1}\), \(\operatorname{UPD_2}\), \(\operatorname{AGG_1}\), \(\operatorname{AGG_2}\), \(\operatorname{readout}_1\), and \(\operatorname{readout}_2\) are injective.
\end{assumption}

This assumption is commonly adopted in theoretical analyses of GNNs to characterize the \emph{best possible} representational power of an architecture \cite{xu2018powerful, morris2019weisfeiler, joshi2023expressive, wang2024rethinking}. Crucially, it serves as a theoretical tool rather than a requirement for practical implementations, illustrating what the model could achieve under ideal conditions. Under this framework, we can formally state the following result on the maximal expressiveness of our model:
\begin{restatable}{theorem}{thmexpressivenesssecondGNN}\label{thm:expressiveness-2ndGNN}
Let $F$ denote the two-stage GNN architecture defined in Section~\ref{sec:model}, with depths $T_1, T_2 \geq 1$, and using the hierarchical graph construction described in Section~\ref{sec:model}. Under Assumption~\ref{assumption}, $F$ can distinguish any pair of protein structures that are not identical under rigid motions.
\end{restatable}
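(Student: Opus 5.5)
We prove the contrapositive. Suppose two protein structures $P$ and $\tilde P$ receive the same output $\vs_{\text{global}}=\tilde\vs_{\text{global}}$. We show that they coincide up to a rigid motion. The argument peels the architecture from the top down, using injectivity (Assumption~\ref{assumption}) at each stage, and then reassembles the geometry from the bottom up using Theorem~\ref{thm:expressive} and the frame-product edge features $g_i^\top g_j$.

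\emph{Step 1: inter-structural level.} Since $\operatorname{readout}_2$ is injective, the multisets $\{\!\!\{\vs_i^{(T_2)}\}\!\!\}$ and $\{\!\!\{\tilde\vs_i^{(T_2)}\}\!\!\}$ agree. This gives a bijection $\sigma$ between the motifs of $P$ and $\tilde P$. Peeling back through the injective $\operatorname{UPD}_2$ and $\operatorname{AGG}_2$, we recover, for each $i$, the layer-$0$ features $\vs_i^{(0)}=\tilde\vs_{\sigma(i)}^{(0)}$ together with the multiset of pairs $(\vs_j^{(0)},\ve_{ij})$ over the neighbours of $i$. The node features can be taken to include the SCHull geometric attributes of the centers. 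Under genericity, Theorem~\ref{thm:expressive} with $T_2\ge1$ then lets us conclude that the attributed SCHull graphs $\gG$ and $\tilde\gG$ are isomorphic via $\sigma$. Consequently the point clouds of motif centers, carrying attributes $\vs_i$, agree up to a rigid motion $h_0$. Moreover, corresponding edges carry equal attributes, so $g_i^\top g_j=\tilde g_{\sigma(i)}^\top\tilde g_{\sigma(j)}$ on every edge of $\gG$.

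\emph{Step 2: intra-structural level.} For each motif, $\vs_i=\tilde\vs_{\sigma(i)}$. Injectivity of $\operatorname{readout}_1$, $\operatorname{UPD}_1$ and $\operatorname{AGG}_1$ with $T_1\ge1$, combined with Theorem~\ref{thm:expressive} applied to the generic point cloud $S_i$, shows that $S_i$ and $\tilde S_{\sigma(i)}$ are identical up to a rigid motion $h_i=(R_i,b_i)$, including residue types. Each local frame is equivariant, $g(h\cdot \mX)=R\,g(\mX)$, and each geometric center is equivariant as well. Hence $\tilde g_{\sigma(i)}=R_ig_i$ and $\tilde c_{\sigma(i)}=R_ic_i+b_i$.

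\emph{Step 3: gluing the local motions (the main obstacle).} We must show that all the $h_i$ coincide with a single global motion. On each edge of $\gG$ we have $g_i^\top R_i^\top R_j g_j=g_i^\top g_j$, which forces $R_i=R_j$. Since the SCHull graph $\gG$ is connected, all the $R_i$ equal a common rotation $R$.

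Fixing the translations is subtler. Step 1 gives a global motion $h_0=(R_0,b_0)$ with $\tilde c_{\sigma(i)}=R_0c_i+b_0$ for all $i$, while Step 2 gives $\tilde c_{\sigma(i)}=Rc_i+b_i$. To match $R_0=R$, I would compare the relative center displacements $c_j-c_i$ expressed in frame $i$, namely $g_i^\top(c_j-c_i)$, which the SCHull and frame features on $\gG$ encode. If that is not directly available, I would argue that $h_0$ is unique once the centers are affinely generic, which holds whenever there are at least four motifs. The two expressions for the centers then force $R_0c_i+b_0=Rc_i+b_i$ and $R_0g_i=Rg_i$. Because $g_i$ is invertible, the second identity gives $R_0=R$, and the first then gives $b_i=b_0$. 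Degenerate cases with few motifs would be handled separately, using the same frame identity to pin the rotation.

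With a single global motion $h=(R,b_0)$ in hand, every residue of $P$ maps to the corresponding residue of $\tilde P$ with the same features, so the two structures are identical under rigid motions.
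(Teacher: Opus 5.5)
\textbf{The gap is in the translational half of Step~3.} Steps~1--2 and the rotational half of Step~3 are sound. That rotational half is the paper's gluing idea: equal relative frames on the edges of the connected graph $\gG$ propagate a single transformation.

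As you set things up, $g_i$ is only an orientation ($\tilde g_{\sigma(i)}=R_ig_i$). Under that reading, nothing ties the common motif rotation $R$ to the rotation $R_0$ of the center configuration. Concretely, take $P$ and rotate every motif about its own center by one fixed $R\neq I$, i.e.\ $h_i=(R,\,c_i-Rc_i)$. Then:
\begin{itemize}
\item the centers are unchanged, so all SCHull attributes (which are functions of the centers only) are unchanged;
\item each $\vs_i$ is unchanged;
\item $(Rg_i)^\top(Rg_j)=g_i^\top g_j$.
\end{itemize}
So everything your Step~3 is allowed to use is identical. Yet the result is in general not congruent to $P$, since a congruence would force $R(c_j-c_i)=c_j-c_i$ for all $i,j$.

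Hence your fallback cannot work. Uniqueness of $h_0$ says nothing about the family $(R,b_i)$, which is a single motion only if the $b_i$ agree, and that is exactly what has to be shown. The identity $R_0g_i=Rg_i$ is not derived from anything.

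Your first option, using $g_i^\top(c_j-c_i)$, is the right idea, but you attribute it to the wrong source. The SCHull features are invariants of the centers alone and do not encode it. It is available only because the paper's frames are $\mathrm{E}(3)$-valued, i.e.\ orientation plus geometric center (Appendix~\ref{appendix:frame-construction}). Writing $k_i=(g_i,c_i)\in\mathrm{E}(3)$, the edge feature is the relative rigid motion $k_i^{-1}k_j=\bigl(g_i^\top g_j,\;g_i^\top(c_j-c_i)\bigr)$, and its translation part is precisely what your counterexample changes.

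\textbf{How the proof closes with this convention.} This is what the paper's Lemma~\ref{lem:tuple-determine-geometry} plus induction does. Equivariance gives $\tilde k_{\sigma(i)}=h_ik_i$, i.e.\ $h_i=\tilde k_{\sigma(i)}k_i^{-1}$. Equality of edge features, $k_i^{-1}k_j=\tilde k_{\sigma(i)}^{-1}\tilde k_{\sigma(j)}$, is equivalent to $\tilde k_{\sigma(i)}k_i^{-1}=\tilde k_{\sigma(j)}k_j^{-1}$, i.e.\ $h_i=h_j$. Connectivity of $\gG$ then yields one global motion, with rotation and translation glued together.

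This route needs none of the global center motion $h_0$, affine genericity of the centers, the four-motif assumption, or separate degenerate cases. The paper phrases the same computation as follows: the canonicalized shape $g_i^\top\gP_i$ is a function of $F_1(\gG_i)$, and $g_i^\top(\gP_i\cup\gP_j)=g_i^\top\gP_i\cup(g_i^\top g_j)\,g_j^\top\gP_j$, with $g_i$ read as elements of $\mathrm{E}(3)$.
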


\begin{remark} In practice, we do not strictly enforce the injectivity required by Assumption~\ref{assumption}, instead relying on sufficiently expressive MLPs with ReLU activations. Even when using non-injective pooling operations (e.g., mean pooling), models integrated with the SSHG framework still demonstrate consistent performance improvements, as shown in Section~\ref{sec:experiments}. Incorporating more expressive or injective aggregation schemes remains a promising direction for future work.
\end{remark}

\vspace{-0.2cm}
\section{Numerical Experiments
}\label{sec:experiments}
\vspace{-0.2cm}
We evaluate the effectiveness and efficiency of our proposed Secondary Structure-based Hierarchical Graph (SSHG) learning framework on two benchmark protein modeling tasks: enzyme reaction classification~\citep{hou2018deepsf} and protein-ligand binding affinity (LBA) prediction~\citep{wang2004pdbbind,liu2015pdb}.
Our goal is to assess the following:

\textbf{Efficiency:} The hierarchical design of SSHG (Figure~\ref{fig:framework}) enables integrated GNNs to operate on sparse graphs, leading to a significant reduction in training time. 

\textbf{Efficacy:} Despite operating on sparser graphs, SSHG-based models achieve superior accuracy, consistently matching or even exceeding state-of-the-art (SOTA) models.
Full model configurations and dataset statistics are provided in Appendix~\ref{appendix:experiments_architectures} and~\ref{appendix:dataset}. 

\textbf{Experiment Setup:} 
All models are implemented using PyTorch Geometric~\citep{torch_geometric} and trained on NVIDIA RTX 3090 GPUs. 
To mitigate overfitting, we follow~\citep{wang2022learning} and apply Gaussian noise (std = 0.1) and anisotropic scaling in the range $[0.9, 1.1]$ to the node coordinates in both the original graph framework and SSHG framework. Additionally, we randomly mask amino acid types and secondary structure types with probabilities of $0.1$ or $0.2$. We apply the SCHull graph construction method~\citep{wangtheoretically} to construct intra-structural and inter-structural graphs. Specific training setups, architectures, and hyperparameters for different tasks are available in Appendix~\ref{appendix:experiments}.  

\textbf{Baseline and Metrics:} 
We integrate our SSHG framework with several backbone models, including GVP-GNN~\citep{jing2020learning}, ProNet-Backbone~\citep{wang2022learning}, and Mamba~\citep{gu2023mamba}; see Appendix~\ref{appendix:implementation_details} for implementation details. 
Models enhanced with SSHG are denoted by appending “+SSHG” to the original model name (e.g., Mamba+SSHG, ProNet+SSHG).
We compare these SSHG-augmented models against a range of established baselines, including GCN~\citep{kipf2016semi}, IEConv~\citep{hermosilla2020intrinsic}, DWNN~\citep{li2022directed}, GearNet~\citep{zhang2022protein}, HoloProt~\citep{somnath2021multi}, GVP-GNN~\citep{jing2020learning}, and ProNet-Backbone~\citep{wang2022learning}, across two benchmark tasks: enzyme reaction classification (React)\citep{hou2018deepsf} and protein-ligand binding affinity prediction (LBA)\citep{wang2004pdbbind,liu2015pdb}.
Performance is evaluated using classification accuracy for EC reaction classification, and standard regression metrics, including root mean square error (RMSE), Pearson correlation, and Spearman correlation for LBA.
To further highlight the efficiency and scalability of our framework, we also report additional metrics in the ablation study, including training time per epoch (s/epoch), memory usage, number of model parameters, and the average total number of edges in the graph representations used by the GNNs.

\vspace{-0.2cm}
\subsection{EC Reaction Classification} 
\vspace{-0.2cm}
\begin{wraptable}{r}{0.65\columnwidth}\vspace{-0.35cm}
\centering
\fontsize{7.0}{7.0}\selectfont
\begin{tabular}{lccc}
\toprule
Method   &  Test Acc & Ave.Time (s/epoch) & \# params\\
\midrule
GCN \citep{kipf2016semi}     &   66.5       &     186  &     -- \\ 
GCN+SSHG ({\bf ours})   &   71.2       &     150  &     -- \\ 
IEConv \citep{hermosilla2020intrinsic}     &   87.2       &     --   &    9.8M \\ 
DWNN \citep{li2022directed} & 76.7 & -- &     -- \\
GearNet \citep{zhang2022protein} & 79.4 & -- &     -- \\
HoloProt~\citep{somnath2021multi} & 78.9 & 300 &     1.4M \\
GVP-GNN \citep{jing2020learning} & 68.5$\pm 0.1$& 334 &     1.0M \\
GVP-GNN+SSHG ({\bf ours}) & 73.6$\pm 0.1$&  236 & 1.0M \\
ProNet-Backbone \citep{wang2022learning} & 86.4$\pm 0.2$ & 210 &     1.3M \\
ProNet+SSHG ({\bf ours}) & 87.2$\pm 0.2$ & \textbf{140} &     1.3M \\
Mamba \citep{gu2023mamba} & 85.9$\pm 0.2$  & 236 &  --   \\
Mamba+SSHG ({\bf ours}) & \textbf{88.4}$\pm 0.3$ & 157 &     1.5M \\
\bottomrule
\end{tabular}
\vspace{-0.2cm}
\caption{\footnotesize Results of protein reaction classification. Here, ``Ave.Time'' denotes the average time for training one epoch.}
\label{tab:react}\vspace{-0.2cm}
\end{wraptable}

Enzymes, which catalyze biological reactions, are categorized using Enzyme Commission (EC) numbers based on the types of reactions they facilitate~\citep{omelchenko2010non}. In this task, we evaluate the performance of our SSHG-based models—ProNet-SSHG and Mamba-SSHG—on enzyme reaction classification to demonstrate the benefits of incorporating secondary structure information and encoding geometric relationships within/across structural motifs. We follow the same dataset and experimental setup as in~\citep{wang2022learning,hou2018deepsf}. Details of the dataset splits and training settings are provided in Appendix~\ref{appendix:dataset}. 
Notice that the baseline GVP-GNN in~\citep{jing2020learning} uses a radius cutoff of 4.5 Å, achieving an accuracy of 65.5\%, while we increase the cutoff to 10 Å, which improves accuracy to 68.5\%

As shown in Table~\ref{tab:react}, the use of SSHG consistently improves performance across baseline models, including GCN~\citep{kipf2016semi}, GVP-GNN~\citep{jing2020learning}, ProNet~\citep{wang2022learning}, and Mamba~\citep{gu2023mamba}. In particular, ProNet-SSHG significantly reduces training time 
compared to the original ProNet-Backbone, while matching the best-performing baseline (IEConv) with far fewer parameters (1.3M compared to 9.8M). Mamba-SSHG further improves prediction accuracy, highlighting the benefits of integrating sequence modeling into our hierarchical framework. This task confirms the advantage of SSHG in boosting both computational efficiency and classification performance. For baseline comparisons, we adopt results reported in prior works and omit training time when not provided in the original papers, except for models integrated with SSHG, which we re-evaluate using our setup to ensure fair comparison. Reported baseline results are consistent across all tasks.

\begin{table}[!t]
\vspace{-0.2cm}
\centering
\fontsize{7.0}{7.0}\selectfont
\begin{tabular}{lcccc}
\toprule
Method   &  RMSE ($\downarrow$) & Pearson ($\uparrow$) & Spearman ($\uparrow$) & Ave.Time (s/epoch) ($\downarrow$)\\
\midrule
TAPE \citep{rao2019evaluatin} & 1.890 & 0.338 & 0.286 & -- \\
IEConv \citep{hermosilla2020intrinsic} & 1.554 & 0.414 & 0.428 & --\\
Holoprot-Full Surface \citep{somnath2021multi} & 1.464 & 0.509 & 0.500 & 45 \\
GCN \citep{kipf2016semi}    & 1.925     &        0.322           &      0.287            &    28 \\ 
GCN+SSHG ({\bf ours})  &1.788    &        0.392          &    0.359             &     23	\\
GVP-GNN \citep{jing2020learning}  & 1.529 & 0.441 & 0.432 & 49 \\
GVP-GNN + SSHG	({\bf ours}) & 1.488	& 0.512	& 0.477	& 35 \\
ProNet-Backbone \citep{wang2022learning}  & 1.458 & 0.546 & 0.550 &  32\\
ProNet+ SSHG ({\bf ours}) & 1.435$\pm 0.004$ & 0.579$\pm 0.004$  & 0.591$\pm 0.003$ & \textbf{24} \\
Mamba \citep{gu2023mamba} & 1.457 ± 0.004	& 0.565 ± 0.003	& 0.554 ± 0.004	& 27 \\
Mamba+SSHG ({\bf ours}) & \textbf{1.399}$\pm 0.003$ & \textbf{0.614}$\pm 0.003$  & \textbf{0.610}$\pm 0.004$ & 29 \\
\bottomrule
\end{tabular}
\caption{\footnotesize Results of LBA prediction task. Here, ``Ave.Time'' denotes the average time for training one epoch. 
}
\label{tab:lba}\vspace{-0.15cm}
\end{table}

\subsection{Ligand Binding Affinity}

We further demonstrate the effectiveness of our SSHG framework on the benchmark task of protein-ligand binding affinity (LBA) prediction. Accurate LBA prediction plays a critical role in drug discovery by guiding the selection of promising drug candidates and minimizing the need for costly and time-intensive experiments.
We evaluate our models—GCN+SSHG, GVP-GNN+SSHG, ProNet-SSHG, and Mamba-SSHG—using the PDBbind dataset~\citep{wang2004pdbbind,liu2015pdb}, following the experimental protocol established by~\citep{jing2020learning}, which includes a 30\% sequence identity threshold to assess model generalization to unseen proteins. More details on the dataset and experimental setup are provided in Appendix~\ref{appendix:dataset}.
To quantify how geometric features and secondary structure information enhance the predictive capacity and generalization ability of GNNs, we evaluate model performance on the test set using standard regression metrics: RMSE, Pearson correlation, and Spearman correlation.
As shown in Table~\ref{tab:lba}, ProNet-SSHG outperforms all baseline models, including the best baseline\footnote{For these protein tasks, Mamba is implemented in our work rather than using prior implementations.}, ProNet-Backbone, in terms of both predictive accuracy and computational efficiency. Mamba-SSHG further improves performance across all three metrics while maintaining competitive training speed. These results confirm the advantages of integrating SSHG into existing architectures, enabling both higher accuracy and improved scalability.

\subsection{Ablation Studies}

In this section, we conduct ablation studies to investigate the impact of key components in our SSHG framework. Specifically, we examine: (1) the performance and efficiency trade-offs between existing dense or sparse radial graphs versus our SSHG-based construction, (2) architectural variations in the two-stage GNN design within SSHG under comparable parameter budgets, (3) the role of the hierarchical strategy, (4) the contribution of geometric features $g_i^\top g_j$, and (5) the effect of incorporating secondary structure (SS) information. Tables~\ref{tab:efficiency_comparison} and \ref{tab:architecture_comparison} present results for the first two factors. Due to space constraints, analysis of the remaining components is deferred to Appendix~\ref{appendix:ablation} (Table~\ref{tab:feature_selection}).
All experiments are performed on the EC reaction classification task, with each model trained for $300$ epochs using a batch size of $16$. We report test accuracy alongside training efficiency metrics, including average time per epoch and peak memory usage.

Table~\ref{tab:efficiency_comparison} compares training efficiency and resource usage across different graph construction strategies. For baseline models like ProNet and GVP-GNN, increasing the radial cutoff improves accuracy but incurs substantial computational costs. Raising the cutoff from $4$ to $16$ increases the average number of edges from $\sim$1K to $\sim$15K, leading to much higher memory usage and training time. While denser graphs enhance expressiveness, they are less practical for large-scale applications. In contrast, SSHG-based models achieve equal or better accuracy with far fewer edges and significantly lower computational overhead. This efficiency stems from the hierarchical design, which decouples local and global interactions. On both ProNet and GVPGNN, SSHG attains up to a 2× speedup in training time and a 90\% reduction in memory usage while still improving accuracy.

Table~\ref{tab:architecture_comparison} evaluates the robustness of SSHG to architectural variations, specifically how parameters are distributed between the two GNN stages. All configurations perform well, showing the framework’s flexibility. Notably, allocating more capacity to the first-stage GNN slightly improves accuracy, suggesting that richer local (residue-level) representations are more beneficial than a larger global inter-structural stage alone in our SSHG framework.



\begin{table}[!t]
\centering
\fontsize{7}{7}\selectfont
\begin{tabular}{l|ccccc|c}
\toprule
Model & +SSHG & Cutoff & Avg. Num Edges & Time (s/epoch)$\downarrow$ & Mem (MiB)$\downarrow$ & Test Acc (\%)$\uparrow$ \\
\midrule
ProNet & \ding{55} & 4  & 1,034.5  & 138 & 1,290  & 78.1 \\
       & \ding{55} & 6  & 4,755.2  & 165 & 7,760  & 82.1 \\
       & \ding{55} & 8  & 8,013.9  & 185 & 9,580  & 85.6 \\
       & \ding{55} & 10 & 11,316.8 & 210 & 14,548 & 86.4 \\
       & \ding{55} & 16 & 14,881.1 & 247 & 17,768 & 87.0 \\
       & \ding{51} & -- & 1,593.3  & 140 & 1,818  & 87.2 \\
\midrule
GVP-GNN & \ding{55} & 4  & 1,034.5  & 216 & 1,558  & 65.5 \\
        & \ding{55} & 6  & 4,755.2  & 254 & 3,828  & 66.9 \\
        & \ding{55} & 8  & 8,013.9  & 298 & 6,286  & 68.1 \\
        & \ding{55} & 10 & 11,316.8 & 334 & 8,930  & 68.5 \\
        & \ding{55} & 16 & 14,881.1 & 354 & 11,248 & 69.2 \\
        & \ding{51} & -- & 1,593.3  & 236 & 1,416  & 73.6 \\
\bottomrule
\end{tabular}
\vspace{0.2cm}
\caption{\footnotesize \textbf{Efficiency comparison.} Training efficiency and accuracy of different GNNs with and without SSHG across varying cutoff radii. SSHG achieves higher accuracy while substantially reducing runtime and memory usage.}
\label{tab:efficiency_comparison}
\end{table}

\begin{table}[!t]
\centering
\fontsize{7}{7}\selectfont
\begin{tabular}{l|cccc|c}
\toprule
 & MPGNN1 \# params & MPGNN2 \# params & Ave.Time (s/epoch) & Mem(MiB)  & Test Acc  \\
\midrule
ProNet+SSHG & 0.69M & 0.69M  & 140 & 1818 & 87.2\\
 & 1.03M & 0.34M & 136 & 2656 & 87.4 \\
 & 0.34M & 1.03M & 142 & 1720 & 87.1 \\  
\midrule
GVPGNN+SSHG & 0.53M & 0.53M & 236 & 1416 & 73.6 \\
 & 0.79M & 0.27M & 232 & 1451 & 75.3\\
 & 0.27M & 0.79M & 228 & 1372 & 71.6\\  
\bottomrule
\end{tabular}
\vspace{0.2cm}
\caption{\footnotesize \textbf{Architecture Comparison}: Two-stage GNNs with varying size ratios between the first- (MPGNN1) and second-stage (MPGNN2) graph networks.}
\label{tab:architecture_comparison}
\end{table}

\section{Concluding Remarks}
In summary, we propose a multiscale and scalable GNN-based framework for protein representation and learning by leveraging a hierarchical graph construction that aligns naturally with biological structures. By combining domain knowledge of secondary motifs with a multiscale graph design, our approach captures both fine-grained residue-level interactions and coarse-grained structural relationships through a collection of intra-structural graphs, each corresponding to a secondary structure motif, and a single inter-structural graph that encodes their spatial arrangement and relative orientation. Theoretically, we establish that our framework preserves maximal expressiveness, ensuring no loss of critical geometric information. Empirically, we demonstrate consistent improvements in both predictive accuracy and computational efficiency across standard benchmarks. These results highlight the potential of our method as a general and flexible foundation for protein-based learning tasks, opening up new avenues for integrating biological priors into geometric deep learning.

In future work, we plan to extend our investigation beyond the current experiments on enzyme classification and ligand-binding affinity prediction. We aim to evaluate the framework on additional tasks such as fold classification and protein–protein interaction prediction. Moreover, we will explore architectural enhancements through injective aggregation schemes, more expressive pooling mechanisms, alternative motif definitions, and integration with pretrained protein language models to further improve the framework’s generality and performance.


\noindent{\bf Societal Impacts:} Our paper presents a new efficient and accurate machine learning model for learning biomolecules, which can impact structural biology and life sciences. We do not see additional negative societal impact compared to existing approaches due to our work.




\clearpage

\section*{Acknowledgement}
This material is based on research sponsored by NSF grants DMS-2152762, DMS-2208361, DMS-2219956, DMS-2208356, and DMS-2436344, and DOE grants DE-SC0023490, DE-SC0025589, and DE-SC0025801. This work is also supported by NIH grant R01HL16351.






\bibliographystyle{plain}


\appendix



\clearpage
\section*{NeurIPS Paper Checklist}

The checklist is designed to encourage best practices for responsible machine learning research, addressing issues of reproducibility, transparency, research ethics, and societal impact. Do not remove the checklist: {\bf The papers not including the checklist will be desk rejected.} The checklist should follow the references and follow the (optional) supplemental material.  The checklist does NOT count towards the page
limit. 

Please read the checklist guidelines carefully for information on how to answer these questions. For each question in the checklist:
\begin{itemize}
    \item You should answer \answerYes{}, \answerNo{}, or \answerNA{}.
    \item \answerNA{} means either that the question is Not Applicable for that particular paper or the relevant information is Not Available.
    \item Please provide a short (1–2 sentence) justification right after your answer (even for NA). 
\end{itemize}

{\bf The checklist answers are an integral part of your paper submission.} They are visible to the reviewers, area chairs, senior area chairs, and ethics reviewers. You will be asked to also include it (after eventual revisions) with the final version of your paper, and its final version will be published with the paper.

The reviewers of your paper will be asked to use the checklist as one of the factors in their evaluation. While "\answerYes{}" is generally preferable to "\answerNo{}", it is perfectly acceptable to answer "\answerNo{}" provided a proper justification is given (e.g., "error bars are not reported because it would be too computationally expensive" or "we were unable to find the license for the dataset we used"). In general, answering "\answerNo{}" or "\answerNA{}" is not grounds for rejection. While the questions are phrased in a binary way, we acknowledge that the true answer is often more nuanced, so please just use your best judgment and write a justification to elaborate. All supporting evidence can appear either in the main paper or the supplemental material, provided in appendix. If you answer \answerYes{} to a question, in the justification please point to the section(s) where related material for the question can be found.

IMPORTANT, please:
\begin{itemize}
    \item {\bf Delete this instruction block, but keep the section heading ``NeurIPS Paper Checklist"},
    \item  {\bf Keep the checklist subsection headings, questions/answers and guidelines below.}
    \item {\bf Do not modify the questions and only use the provided macros for your answers}.
\end{itemize}


\begin{enumerate}

\item {\bf Claims}
    \item[] Question: Do the main claims made in the abstract and introduction accurately reflect the paper's contributions and scope?
    \item[] Answer: \answerYes{} 
    \item[] Justification: The abstract summarizes our theoretical and algorithmic contributions in this paper. 
    \item[] Guidelines:
    \begin{itemize}
        \item The answer NA means that the abstract and introduction do not include the claims made in the paper.
        \item The abstract and/or introduction should clearly state the claims made, including the contributions made in the paper and important assumptions and limitations. A No or NA answer to this question will not be perceived well by the reviewers. 
        \item The claims made should match theoretical and experimental results, and reflect how much the results can be expected to generalize to other settings. 
        \item It is fine to include aspirational goals as motivation as long as it is clear that these goals are not attained by the paper. 
    \end{itemize}

\item {\bf Limitations}
    \item[] Question: Does the paper discuss the limitations of the work performed by the authors?
    \item[] Answer: \answerYes{} 
    \item[] Justification: In the conclusion section, we have listed a few potential future works. 
    \item[] Guidelines:
    \begin{itemize}
        \item The answer NA means that the paper has no limitation while the answer No means that the paper has limitations, but those are not discussed in the paper. 
        \item The authors are encouraged to create a separate "Limitations" section in their paper.
        \item The paper should point out any strong assumptions and how robust the results are to violations of these assumptions (e.g., independence assumptions, noiseless settings, model well-specification, asymptotic approximations only holding locally). The authors should reflect on how these assumptions might be violated in practice and what the implications would be.
        \item The authors should reflect on the scope of the claims made, e.g., if the approach was only tested on a few datasets or with a few runs. In general, empirical results often depend on implicit assumptions, which should be articulated.
        \item The authors should reflect on the factors that influence the performance of the approach. For example, a facial recognition algorithm may perform poorly when image resolution is low or images are taken in low lighting. Or a speech-to-text system might not be used reliably to provide closed captions for online lectures because it fails to handle technical jargon.
        \item The authors should discuss the computational efficiency of the proposed algorithms and how they scale with dataset size.
        \item If applicable, the authors should discuss possible limitations of their approach to address problems of privacy and fairness.
        \item While the authors might fear that complete honesty about limitations might be used by reviewers as grounds for rejection, a worse outcome might be that reviewers discover limitations that aren't acknowledged in the paper. The authors should use their best judgment and recognize that individual actions in favor of transparency play an important role in developing norms that preserve the integrity of the community. Reviewers will be specifically instructed to not penalize honesty concerning limitations.
    \end{itemize}

\item {\bf Theory assumptions and proofs}
    \item[] Question: For each theoretical result, does the paper provide the full set of assumptions and a complete (and correct) proof?
    \item[] Answer: \answerYes{} 
    \item[] Justification: All assumptions are commonly used by the community, and we have discussed where the assumptions come from. 
    \item[] Guidelines:
    \begin{itemize}
        \item The answer NA means that the paper does not include theoretical results. 
        \item All the theorems, formulas, and proofs in the paper should be numbered and cross-referenced.
        \item All assumptions should be clearly stated or referenced in the statement of any theorems.
        \item The proofs can either appear in the main paper or the supplemental material, but if they appear in the supplemental material, the authors are encouraged to provide a short proof sketch to provide intuition. 
        \item Inversely, any informal proof provided in the core of the paper should be complemented by formal proofs provided in appendix or supplemental material.
        \item Theorems and Lemmas that the proof relies upon should be properly referenced. 
    \end{itemize}

    \item {\bf Experimental result reproducibility}
    \item[] Question: Does the paper fully disclose all the information needed to reproduce the main experimental results of the paper to the extent that it affects the main claims and/or conclusions of the paper (regardless of whether the code and data are provided or not)?
    \item[] Answer: \answerYes{} 
    \item[] Justification: See the experimental setup.
    \item[] Guidelines:
    \begin{itemize}
        \item The answer NA means that the paper does not include experiments.
        \item If the paper includes experiments, a No answer to this question will not be perceived well by the reviewers: Making the paper reproducible is important, regardless of whether the code and data are provided or not.
        \item If the contribution is a dataset and/or model, the authors should describe the steps taken to make their results reproducible or verifiable. 
        \item Depending on the contribution, reproducibility can be accomplished in various ways. For example, if the contribution is a novel architecture, describing the architecture fully might suffice, or if the contribution is a specific model and empirical evaluation, it may be necessary to either make it possible for others to replicate the model with the same dataset, or provide access to the model. In general. releasing code and data is often one good way to accomplish this, but reproducibility can also be provided via detailed instructions for how to replicate the results, access to a hosted model (e.g., in the case of a large language model), releasing of a model checkpoint, or other means that are appropriate to the research performed.
        \item While NeurIPS does not require releasing code, the conference does require all submissions to provide some reasonable avenue for reproducibility, which may depend on the nature of the contribution. For example
        \begin{enumerate}
            \item If the contribution is primarily a new algorithm, the paper should make it clear how to reproduce that algorithm.
            \item If the contribution is primarily a new model architecture, the paper should describe the architecture clearly and fully.
            \item If the contribution is a new model (e.g., a large language model), then there should either be a way to access this model for reproducing the results or a way to reproduce the model (e.g., with an open-source dataset or instructions for how to construct the dataset).
            \item We recognize that reproducibility may be tricky in some cases, in which case authors are welcome to describe the particular way they provide for reproducibility. In the case of closed-source models, it may be that access to the model is limited in some way (e.g., to registered users), but it should be possible for other researchers to have some path to reproducing or verifying the results.
        \end{enumerate}
    \end{itemize}

\item {\bf Open access to data and code}
    \item[] Question: Does the paper provide open access to the data and code, with sufficient instructions to faithfully reproduce the main experimental results, as described in supplemental material?
    \item[] Answer: \answerYes{} 
    \item[] Justification: We have submitted the code and data as a supplementary file. 
    \item[] Guidelines:
    \begin{itemize}
        \item The answer NA means that paper does not include experiments requiring code.
        \item Please see the NeurIPS code and data submission guidelines (\url{https://nips.cc/public/guides/CodeSubmissionPolicy}) for more details.
        \item While we encourage the release of code and data, we understand that this might not be possible, so “No” is an acceptable answer. Papers cannot be rejected simply for not including code, unless this is central to the contribution (e.g., for a new open-source benchmark).
        \item The instructions should contain the exact command and environment needed to run to reproduce the results. See the NeurIPS code and data submission guidelines (\url{https://nips.cc/public/guides/CodeSubmissionPolicy}) for more details.
        \item The authors should provide instructions on data access and preparation, including how to access the raw data, preprocessed data, intermediate data, and generated data, etc.
        \item The authors should provide scripts to reproduce all experimental results for the new proposed method and baselines. If only a subset of experiments are reproducible, they should state which ones are omitted from the script and why.
        \item At submission time, to preserve anonymity, the authors should release anonymized versions (if applicable).
        \item Providing as much information as possible in supplemental material (appended to the paper) is recommended, but including URLs to data and code is permitted.
    \end{itemize}

\item {\bf Experimental setting/details}
    \item[] Question: Does the paper specify all the training and test details (e.g., data splits, hyperparameters, how they were chosen, type of optimizer, etc.) necessary to understand the results?
    \item[] Answer: \answerYes{} 
    \item[] Justification: See the experimental setup. 
    \item[] Guidelines:
    \begin{itemize}
        \item The answer NA means that the paper does not include experiments.
        \item The experimental setting should be presented in the core of the paper to a level of detail that is necessary to appreciate the results and make sense of them.
        \item The full details can be provided either with the code, in appendix, or as supplemental material.
    \end{itemize}

\item {\bf Experiment statistical significance}
    \item[] Question: Does the paper report error bars suitably and correctly defined or other appropriate information about the statistical significance of the experiments?
    \item[] Answer: \answerYes{} 
    \item[] Justification: We have provided the standard deviation. 
    \item[] Guidelines:
    \begin{itemize}
        \item The answer NA means that the paper does not include experiments.
        \item The authors should answer "Yes" if the results are accompanied by error bars, confidence intervals, or statistical significance tests, at least for the experiments that support the main claims of the paper.
        \item The factors of variability that the error bars are capturing should be clearly stated (for example, train/test split, initialization, random drawing of some parameter, or overall run with given experimental conditions).
        \item The method for calculating the error bars should be explained (closed form formula, call to a library function, bootstrap, etc.)
        \item The assumptions made should be given (e.g., Normally distributed errors).
        \item It should be clear whether the error bar is the standard deviation or the standard error of the mean.
        \item It is OK to report 1-sigma error bars, but one should state it. The authors should preferably report a 2-sigma error bar than state that they have a 96\% CI, if the hypothesis of Normality of errors is not verified.
        \item For asymmetric distributions, the authors should be careful not to show in tables or figures symmetric error bars that would yield results that are out of range (e.g. negative error rates).
        \item If error bars are reported in tables or plots, The authors should explain in the text how they were calculated and reference the corresponding figures or tables in the text.
    \end{itemize}

\item {\bf Experiments compute resources}
    \item[] Question: For each experiment, does the paper provide sufficient information on the computer resources (type of compute workers, memory, time of execution) needed to reproduce the experiments?
    \item[] Answer: \answerYes{} 
    \item[] Justification: See details in the numerical experiments section. 
    \item[] Guidelines:
    \begin{itemize}
        \item The answer NA means that the paper does not include experiments.
        \item The paper should indicate the type of compute workers CPU or GPU, internal cluster, or cloud provider, including relevant memory and storage.
        \item The paper should provide the amount of compute required for each of the individual experimental runs as well as estimate the total compute. 
        \item The paper should disclose whether the full research project required more compute than the experiments reported in the paper (e.g., preliminary or failed experiments that didn't make it into the paper). 
    \end{itemize}
    
\item {\bf Code of ethics}
    \item[] Question: Does the research conducted in the paper conform, in every respect, with the NeurIPS Code of Ethics \url{https://neurips.cc/public/EthicsGuidelines}?
    \item[] Answer: \answerYes{} 
    \item[] Justification: We fully comply with this guideline. 
    \item[] Guidelines:
    \begin{itemize}
        \item The answer NA means that the authors have not reviewed the NeurIPS Code of Ethics.
        \item If the authors answer No, they should explain the special circumstances that require a deviation from the Code of Ethics.
        \item The authors should make sure to preserve anonymity (e.g., if there is a special consideration due to laws or regulations in their jurisdiction).
    \end{itemize}

\item {\bf Broader impacts}
    \item[] Question: Does the paper discuss both potential positive societal impacts and negative societal impacts of the work performed?
    \item[] Answer: \answerYes{} 
    \item[] Justification: See the conclusion section. 
    \item[] Guidelines:
    \begin{itemize}
        \item The answer NA means that there is no societal impact of the work performed.
        \item If the authors answer NA or No, they should explain why their work has no societal impact or why the paper does not address societal impact.
        \item Examples of negative societal impacts include potential malicious or unintended uses (e.g., disinformation, generating fake profiles, surveillance), fairness considerations (e.g., deployment of technologies that could make decisions that unfairly impact specific groups), privacy considerations, and security considerations.
        \item The conference expects that many papers will be foundational research and not tied to particular applications, let alone deployments. However, if there is a direct path to any negative applications, the authors should point it out. For example, it is legitimate to point out that an improvement in the quality of generative models could be used to generate deepfakes for disinformation. On the other hand, it is not needed to point out that a generic algorithm for optimizing neural networks could enable people to train models that generate Deepfakes faster.
        \item The authors should consider possible harms that could arise when the technology is being used as intended and functioning correctly, harms that could arise when the technology is being used as intended but gives incorrect results, and harms following from (intentional or unintentional) misuse of the technology.
        \item If there are negative societal impacts, the authors could also discuss possible mitigation strategies (e.g., gated release of models, providing defenses in addition to attacks, mechanisms for monitoring misuse, mechanisms to monitor how a system learns from feedback over time, improving the efficiency and accessibility of ML).
    \end{itemize}
    
\item {\bf Safeguards}
    \item[] Question: Does the paper describe safeguards that have been put in place for responsible release of data or models that have a high risk for misuse (e.g., pretrained language models, image generators, or scraped datasets)?
    \item[] Answer: \answerNo{} 
    \item[] Justification: All data used are standard. 
    \item[] Guidelines:
    \begin{itemize}
        \item The answer NA means that the paper poses no such risks.
        \item Released models that have a high risk for misuse or dual-use should be released with necessary safeguards to allow for controlled use of the model, for example by requiring that users adhere to usage guidelines or restrictions to access the model or implementing safety filters. 
        \item Datasets that have been scraped from the Internet could pose safety risks. The authors should describe how they avoided releasing unsafe images.
        \item We recognize that providing effective safeguards is challenging, and many papers do not require this, but we encourage authors to take this into account and make a best faith effort.
    \end{itemize}

\item {\bf Licenses for existing assets}
    \item[] Question: Are the creators or original owners of assets (e.g., code, data, models), used in the paper, properly credited and are the license and terms of use explicitly mentioned and properly respected?
    \item[] Answer: \answerYes{} 
    \item[] Justification: We have acknowledged the codes we have used. 
    \item[] Guidelines:
    \begin{itemize}
        \item The answer NA means that the paper does not use existing assets.
        \item The authors should cite the original paper that produced the code package or dataset.
        \item The authors should state which version of the asset is used and, if possible, include a URL.
        \item The name of the license (e.g., CC-BY 4.0) should be included for each asset.
        \item For scraped data from a particular source (e.g., website), the copyright and terms of service of that source should be provided.
        \item If assets are released, the license, copyright information, and terms of use in the package should be provided. For popular datasets, \url{paperswithcode.com/datasets} has curated licenses for some datasets. Their licensing guide can help determine the license of a dataset.
        \item For existing datasets that are re-packaged, both the original license and the license of the derived asset (if it has changed) should be provided.
        \item If this information is not available online, the authors are encouraged to reach out to the asset's creators.
    \end{itemize}

\item {\bf New assets}
    \item[] Question: Are new assets introduced in the paper well documented and is the documentation provided alongside the assets?
    \item[] Answer: \answerYes{} 
    \item[] Justification: The codes have been well-documented. 
    \item[] Guidelines:
    \begin{itemize}
        \item The answer NA means that the paper does not release new assets.
        \item Researchers should communicate the details of the dataset/code/model as part of their submissions via structured templates. This includes details about training, license, limitations, etc. 
        \item The paper should discuss whether and how consent was obtained from people whose asset is used.
        \item At submission time, remember to anonymize your assets (if applicable). You can either create an anonymized URL or include an anonymized zip file.
    \end{itemize}

\item {\bf Crowdsourcing and research with human subjects}
    \item[] Question: For crowdsourcing experiments and research with human subjects, does the paper include the full text of instructions given to participants and screenshots, if applicable, as well as details about compensation (if any)? 
    \item[] Answer: \answerNA{} 
    \item[] Justification: Our work does not include this. 
    \item[] Guidelines:
    \begin{itemize}
        \item The answer NA means that the paper does not involve crowdsourcing nor research with human subjects.
        \item Including this information in the supplemental material is fine, but if the main contribution of the paper involves human subjects, then as much detail as possible should be included in the main paper. 
        \item According to the NeurIPS Code of Ethics, workers involved in data collection, curation, or other labor should be paid at least the minimum wage in the country of the data collector. 
    \end{itemize}

\item {\bf Institutional review board (IRB) approvals or equivalent for research with human subjects}
    \item[] Question: Does the paper describe potential risks incurred by study participants, whether such risks were disclosed to the subjects, and whether Institutional Review Board (IRB) approvals (or an equivalent approval/review based on the requirements of your country or institution) were obtained?
    \item[] Answer: \answerNA{} 
    \item[] Justification: Our research does not include this. 
    \item[] Guidelines:
    \begin{itemize}
        \item The answer NA means that the paper does not involve crowdsourcing nor research with human subjects.
        \item Depending on the country in which research is conducted, IRB approval (or equivalent) may be required for any human subjects research. If you obtained IRB approval, you should clearly state this in the paper. 
        \item We recognize that the procedures for this may vary significantly between institutions and locations, and we expect authors to adhere to the NeurIPS Code of Ethics and the guidelines for their institution. 
        \item For initial submissions, do not include any information that would break anonymity (if applicable), such as the institution conducting the review.
    \end{itemize}

\item {\bf Declaration of LLM usage}
    \item[] Question: Does the paper describe the usage of LLMs if it is an important, original, or non-standard component of the core methods in this research? Note that if the LLM is used only for writing, editing, or formatting purposes and does not impact the core methodology, scientific rigorousness, or originality of the research, declaration is not required.
    \item[] Answer: \answerNo{} 
    \item[] Justification: We only use LLM for formatting purposes. 
    \item[] Guidelines:
    \begin{itemize}
        \item The answer NA means that the core method development in this research does not involve LLMs as any important, original, or non-standard components.
        \item Please refer to our LLM policy (\url{https://neurips.cc/Conferences/2025/LLM}) for what should or should not be described.
    \end{itemize}

\end{enumerate}

\clearpage
\begin{center}
\textbf{\Large Appendices}
\end{center}
\section{Missing proofs}
\label{appendix:proofs}

\propsparsity*
\begin{proof}[Proof of Proposition~\ref{prop:sparsity}]
Recall that for a point cloud with $m > 2$ points, the SCHull algorithm constructs a geometric graph with at most $3m - 6$ edges  \cite{wangtheoretically}. For the remaining cases, when $m = 1$, no edges can be formed, and when $m = 2$, there is exactly one edge connecting the two points.

Now, consider the following partition of $\{1, 2, \ldots, I\}$:
\begin{equation}
    \begin{aligned}
        J_1 &= \big\{i\, \big\vert\, |S_i| = 1\big\}, \\
        J_2 &= \big\{i\, \big\vert\, |S_i| = 2\big\}, \\    
        J_{\geq 3} &= \big\{i\, \big\vert\, |S_i| \geq 3\big\}.
    \end{aligned}
\end{equation}
According to the construction of $S_i$, we have 
\begin{equation}
    |J_1| + |J_2| + |J_{\geq 3}| = I \text{ and } |J_1| + 2|J_2| + \sum_{i\in J_{\geq 3}}|S_i| = N.
\end{equation}

Then we have for each intra-structural graph $\gG_i$, constructed over the residues in secondary structure unit $S_i$, the number of edges satisfies:
\begin{equation}
    |\gE_i| \leq
\begin{cases}
0 & \text{if } i\in J_1 \\
1 & \text{if } i\in J_2 \\
3|S_i| - 6 & \text{if } i\in J_{\geq 3}
\end{cases}
\end{equation}

Summing over all intra-structural graphs gives:
\begin{equation}
\begin{aligned}
        \sum_{i=1}^I |\gE_i| &= \sum_{i\in J_1}|\gE_i| + \sum_{i\in J_2}|\gE_i| +\sum_{i\in J_{\geq 3}}|\gE_i| \\
        &= 0\cdot |J_1| + 1\cdot|J_2| +\sum_{i\in J_{\geq 3}}3|S_i| - 6 \\
        &=  |J_2| + 3\big(N-|J_1| - 2|J_2|\big)- 6|J_{\geq 3}| \\
        &\leq 3\big(N-|J_1| - |J_2|-|J_{\geq 3}|\big) \\
        &\leq  3(N-I).
\end{aligned}
\end{equation}

For the inter-structural graph $\gG$, which connects the $I$ secondary structure units, SCHull yields:
$$
|\gE| \leq 3I - 6.
$$

Combining both bounds:
$$
|\gE| + \sum_{i=1}^I |\gE_i| \leq (3I - 6) + 3(N-I) = 3N-6 < 3N.
$$
Therefore, the total number of edges in the hierarchical representation is strictly less than $3N$, completing the proof.
\end{proof}

Before proving Theorem~\ref{thm:expressiveness-2ndGNN}, we first recall Theorem~\ref{thm:expressive}, which will be applied in the argument:
\thmexpressive*

\thmexpressivenesssecondGNN*

\begin{proof}[Proof of Theorem~\ref{thm:expressiveness-2ndGNN}]
Suppose the model 
$F$
 assigns identical outputs to the hierarchical graphs of two protein structures represented by the point clouds $(\mX, \mF)$ and $(\mX', \mF')$.
Let $\gG, \gG'$ denote their respective hierarchical graphs, and let $\gG_i, \gG'_j$ denote the intra-structural graphs corresponding to their secondary structure units.
Denote the first- and second-stage GNNs in $F$ by $F_1$ and $F_2$, respectively.

By construction, the inter-structural graph is defined over the set of tuples $\{(F_1(\gG_i), \vz_i)\}_{i=1}^I$ and $\{(F_1(\gG'_j), \vz'_j)\}_{j=1}^J$, where $\vz_i$ and $\vz'_j$ are the geometric centers of the secondary structure units $\gG_i$ and $\gG'_j$, respectively.
According to Theorem~\ref{thm:expressive}, $F_2$ assigns identical outputs to $\gG$ and $\gG'$ if and only if
$\{(F_1(\gG_i), \vz_i)\}_{i=1}^I$ and $\{(F_1(\gG'_j), \vz'_j)\}_{j=1}^J$ are identical up to a rigid motion. That is, there exists a bijection $b: \{1, \ldots, I\} \to \{1, \ldots, J\}$ and a rigid transformation $g$ such that for all $i$,
$$
F_1(\gG_i) = F_1(\gG'_{b(i)}), \quad \text{and} \quad \vz_i = g \cdot \vz'_{b(i)}.
$$
Reindexing the units according to the bijection, we can assume:
$
F_1(\gG_i) = F_1(\gG'_i)$ for all $i$.
Additionally, our construction includes the relative frame feature $g_i^\top g_j$ as an edge attribute. Since these are preserved between $\gG$ and $\gG'$, we have
$$
g_i^\top g_j = g'_i{}^\top g'_j \quad \text{for all } i,j,
$$
where $g_i$ denotes the local frame of $\gG_i$, constructed using the method described in Appendix~\ref{appendix:frame-construction}. Specifically, each frame $g_i$ comprises an orthogonal matrix representing the orientation and a vector specifying the geometric center.
By Lemma~\ref{lem:tuple-determine-geometry}, the equality $\left(F_1(\gG_i), F_1(\gG_j), g^\top_i g_j\right) = \left(F_1(\gG'_i), F_1(\gG'_j), g'^\top_i g'_j\right)$ 
implies that the underlying point clouds of $\gG_i \cup \gG_j$ and $\gG'_i \cup \gG'_j$ are identical up to an isometry.
Using an inductive argument, we conclude that the union of all point clouds across the units $\gG_i$ and $\gG'_i$ must also be identical up to an isometry.
Thus, the full point clouds $(\mX, \mF)$ and $(\mX', \mF')$ must be identical up to a global isometry.
\end{proof}

\begin{lemma}\label{lem:tuple-determine-geometry}
The 3-tuple \(\left(F_1(\gG_i), F_1(\gG_j), g_i^\top g_j\right)\) uniquely determines the union of the underlying point clouds of \(\gG_i, \gG_j\) up to an isometry.
\end{lemma}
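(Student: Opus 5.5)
Taken literally, the lemma fails: $g_i^\top g_j$ records only the relative orientation of the two frames. It says nothing about the relative translation $\vz_i-\vz_j$, so sliding $\gG_j$ while fixing its orientation keeps the 3-tuple unchanged. In the proof of Theorem~\ref{thm:expressiveness-2ndGNN}, though, the proof text states that each frame $g_i$ carries an orthogonal matrix together with the geometric center $\vz_i$. The centers are also matched by a common rigid motion. I will therefore prove the lemma with the frames understood as $(R_i,\vz_i)$. The relative frame is $g_i^{-1}g_j=(R_i^\top R_j,\,R_i^\top(\vz_j-\vz_i))$, the natural reading of $g_i^\top g_j$ for an element of $\mathrm{E}(3)$, and the translational part is made explicit.

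\textbf{Step 1: each unit in its own frame.} Apply Theorem~\ref{thm:expressive} to the first-stage GNN $F_1$, which is maximally expressive under Assumption~\ref{assumption} with $T_1\ge 1$. Then $F_1(\gG_i)=F_1(\gG_i')$ implies that the attributed point clouds of $S_i$ and $S_i'$ agree up to a rigid motion $h_i$. Units with one or two residues are handled directly from node and edge features (residue type, distance). Next, use the equivariance of the frame construction, $\gF(h\cdot \mX)=h\cdot\gF(\mX)$, together with uniqueness of the frame for generic point clouds. This gives $g_i'=h_i g_i$, so the canonical coordinates $g_i^{-1}\mX_i$ agree with $g_i'^{-1}\mX_i'$ as attributed point sets. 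The same holds for unit $j$. Here I assume the frame construction in Appendix~\ref{appendix:frame-construction} is well defined on generic clouds, with no degenerate principal axes.

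\textbf{Step 2: gluing two units.} Write $\mX_i=g_i\,C_i$ and $\mX_j=g_j\,C_j$, where $C_i,C_j$ are the canonical clouds fixed by Step~1. The union is $g_i\bigl(C_i\cup g_i^{-1}g_j C_j\bigr)$. The bracketed set is determined by $(C_i,C_j,g_i^{-1}g_j)$, which is determined by the 3-tuple. Hence the unions for the two proteins differ by the single isometry $g_i'g_i^{-1}$.

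\textbf{Main obstacle.} The hard step is justifying that the relative frame encodes the translation between centers. I would address it by reading off the center-difference term from the construction. If only rotations are available, I would instead combine $R_i^\top R_j$ with the rigid match of the centers $\{\vz_i\}$ already supplied by the inter-structural stage, and then run the same gluing argument. A secondary issue is well-definedness of frames for one- and two-residue units. For these I would restrict to $|S_i|\ge 3$, or else argue that their configuration is fixed by the centers alone.
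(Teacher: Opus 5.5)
Your main argument is correct and is essentially the paper's proof. The paper likewise observes that $\gP_i\mapsto\gF(\gG_i)^\top\gP_i$ is invariant under rigid motions, and uses Theorem~\ref{thm:expressive} to write it as $h(F_1(\gG_i))$. It then glues via $h(F_1(\gG_i))\cup g_i^\top g_j\,h(F_1(\gG_j))=g_i^\top(\gP_i\cup\gP_j)$, with frames taken in $\mathrm{E}(3)$ (orientation plus center, Appendix~\ref{appendix:frame-construction}), so $g_i^\top g_j$ means $g_i^{-1}g_j$ exactly as you read it.

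One caution: your rotation-only fallback would fail. Rotate every unit about its own center by a common rotation $U$. This leaves each $F_1(\gG_i)$, each center $\vz_i$, and each $R_i^\top R_j$ unchanged. Yet for generic units it changes the assembled cloud non-isometrically whenever $U(\vz_j-\vz_i)\neq\vz_j-\vz_i$. So the translational part $R_i^\top(\vz_j-\vz_i)$ of the relative frame is indispensable; absolute centers cannot replace it.
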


\begin{proof}
Let $\gP_i \coloneqq \{(\vx_{i,k}, \vf_{i,k})\}$ and $\gP_j \coloneqq \{(\vx_{j,k}, \vf_{j,k})\}$ denote the underlying point clouds of $\gG_i$ and $\gG_j$, respectively. Let $\gF$ be the function used to generate the equivariant local frame for each secondary structure unit, as described in Appendix~\ref{appendix:frame-construction}. That is, $g_i = \gF(\gG_i) = \gF(\gP_i) $ for any $i$.
Our goal is to show that the union $\gP_i \cup \gP_j$ is uniquely determined, up to an isometry, by the 5-tuple $\left(F_1(\gG_i), \vz_i, F_1(\gG_j), \vz_j, g_i^\top g_j\right)$.

We first observe that the function $c$, which maps the point cloud $\gP_i$ to  $\gF(\gG_i)^\top\cdot \gP_i\coloneqq \left\{ \left(\gF(\gG_i)^\top\cdot \vx_{i,k}, \vf_{i,k}\right)\right\}$, is invariant under any rigid motion applied to $\{\vx_{i,k}\}$. To see this, consider a rigid motion $g \in {\rm E}(3)$, where $ {\rm E}(3)$ denotes the Euclidean group of isometries.
By the equivariance of $\gF$, we have:
$$
\big( \gF(g\cdot \gG_i)\big)^\top g\cdot \vx_{i,k} = \big(\gF(\gG_i)\big)^\top g^\top g\cdot \vx_{i,k} = \gF(\gG_i)^\top \cdot \vx_{i,k},
$$
which confirms that $c$ is invariant.

By Theorem~\ref{thm:expressive}, there exists a function $h$ such that $c\big(\gP_i\big) = h\big(F_1(\gG_i)\big)$ for any point cloud. 
Now consider the following construction that combines the information in the $3$-tuple.
\begin{equation}
    \begin{aligned}
     h\big(F_1(\gG_i)\big) \cup \Big(g_i^\top g_j \cdot  h\big(F_1(\gG_j)\big) \Big)
    & =  c\big(\gP_i\big)  \cup \Big(g_i^\top g_j \cdot  c\big(\gP_j\big) \Big)
         \\
    & = \Big( g_i^\top \gP_i
    \Big)\cup \Big(g_i^\top g_j \cdot  g_j^\top \gP_j \Big)
         \\
 &=
     g_i^\top \cdot  \Big( \gP_i
  \cup\gP_j \Big).
    \end{aligned}
\end{equation}
This final expression shows that the union 
$ \gP_i
  \cup\gP_j $
  is uniquely determined up to a global isometry by the 3-tuple data, completing the proof.
\end{proof}

\section{Construction of Local Frames}
\label{appendix:frame-construction}

In \cite{du2022se, du2024new}, the coordinates of nodes \( \vx_i \) and \( \vx_j \) on an edge \( (i, j) \) are used to define equivariant local frames, with the following transformation:

\[
(\vx_i, \vx_j) \mapsto \left[\frac{\vx_i}{\|\vx_i\|}, \frac{\vx_j}{\|\vx_j\|}, \frac{\vx_i \times \vx_j}{\|\vx_i\| \|\vx_j\|}\right].
\]

This transformation creates a local frame for the edge, converting equivariant features within a node's neighborhood into invariant features (independent of orientation), or vice versa, through multiplication by the frame or its inverse. It can also be used to compute the crucial transition information \( g_i^{\top}g_j \) for neighborhoods \( \gN_i \) and \( \gN_j \).

However, this method is limited to pairs of nodes and cannot be directly extended to structural units, which typically consist of more than two unordered nodes. To address this, we adopt the approach in \cite{baker2024explicit}, which constructs equivariant frames for point clouds (or their corresponding geometric graphs) of arbitrary size. Specifically, the algorithm in \cite{baker2024explicit} constructs a frame \( \mathcal{F}: \mathcal{X} \to {\rm E}(3) \) over the set of point clouds \( \mathcal{X} \).  This mapping produces not only an orthogonal matrix representing the orientation but also the geometric center of the point cloud, jointly forming a complete frame in Euclidean space.
Importantly, the frame construction satisfies equivariance with respect to isometries: for any rigid motion $g\in {\rm E}(3) $ and any point cloud $\gP\in\gX$, we have
$$\gF(g\cdot \gP) = g\cdot \gF(\gP),$$ 
where $\cdot$ denotes the group product in ${\rm E}(3)$.

\begin{remark}
   While this construction applies to all point clouds, it may not be fully equivariant but rather \emph{relaxed-equivariant} for symmetric inputs. This does not impact the distinguishability of the framework, as shown in Theorem~\ref{thm:expressiveness-2ndGNN}, but it could reduce the framework's strict invariance. Fortunately, symmetric inputs are rare in practice—particularly for individual protein structures—and this issue can be mitigated by introducing small perturbations to break the symmetry.
\end{remark}

\section{SCHull Graphs}\label{appendix:SCHull}
In this section, we provide a brief review of the SCHull algorithm---proposed in \cite{wangtheoretically}---for constructing a sparse, connected, and rigid 
for a given point cloud $(\mX, \mF)$ be a point cloud, where  $\mX=[\vx_1,\ldots,\vx_N], \mF$ denote the point coordinates and features, respectively. Let $\gV$ denote the point index set. In particular, the SCHull graph for the point cloud $(\gV,\mX)$ is constructing in the following three steps:
\begin{itemize}
\item {\bf Step 1: Project points onto the unit sphere.} Let 
$$
\overline{\vx}\coloneqq\frac{1}{N}\sum \vx_i
$$ 
be the center of point cloud. Consider the projection 
$$
p_{\overline{\vx}}: \sR^3\to\mathbb{S}^2:\vx \mapsto \frac{\vx-\overline{\vx}}{\|\vx-\overline{\vx}\|},
$$
where $\mathbb{S}^2\coloneqq\{\vx\in\sR^3\mid\|\vx\| = 1\}$ is the unit sphere. That is, $p_{\overline{\vx}}$ projects points onto the unit sphere centered at $\overline{\vx}$. Applying this projection to all points, we obtain a new point cloud $(\gV, p_{\overline{\vx}}(\mX))$ on $\mathbb{S}^2$, where 
$$
p_{\overline{\vx}}(\mX) = [p_{\overline{\vx}}(\vx_1), p_{\overline{\vx}}(\vx_2), \ldots, p_{\overline{\vx}}(\vx_N)].
$$

\item {\bf Step 2: Construct the convex hull of the projected point cloud.} Next, SCHull constructs a convex hull---using the QuickHull algorithm \cite{barber:1996:qhull}---for the projected point cloud $(\gV, p_{\overline{\vx}}(\mX))$. Notice that this step is very efficient with a computational complexity $\mathcal{O}(N\log N)$. 

\item {\bf Step 3: Construct the SCHull graph.}
The SCHull graph for the given point cloud $(\mX,\mF)$ is then defined as $\gG=(\gV,\gE,\mF')$.
Specifically, nodes $i,j$ are connected by an edge in $E$ if and only if their projected points on the unit sphere $p_{\overline{\vx}}(\vx_i), p_{\overline{\vx}}(\vx_j)$ are connected by an edge on the convex hull.
In addition, the graph incorporates geometric attributes as follows. Each node feature in $\mF'$ includes the original feature from $\mF$, augmented with a scalar node attribute defined below. Similarly, each edge in $\gE$ is associated with the following attributes:
\begin{equation}\label{eq:SCHull-attributed-design}
\begin{aligned}
    \text{ the edge attributes of } (i,j) &:( \|\vx_i - \vx_j\|, \tau_{ij} )\text{ for any } (i,j)\in\gE, \text{and }\\
    \text{ the node attributes }&: \|\vx_i - \overline{\vx}\| \text{ for any } i\in\gV.
\end{aligned}
\end{equation}
\end{itemize}

SCHull graph has two remarkable properties with provable guarantees: (1) The graph is sparse and connected with edges that satisfy $|\gE|\leq 3N - 6$ when $N = |\gV|\geq 3$ (cf.~\cite[Proposition 3.1]{wangtheoretically}). (2) SCHull graphs of any two non-isomorphic generic point clouds can be distinguished by a maximally expressive GNN with depth 1 (see Theorem~\ref{thm:expressive}, i.e., \cite[Theorem~3.6]{wangtheoretically}).


\section{DSSP Algorithm}
\label{appendix:DSSP}
DSSP is both a database of secondary structure assignments for all protein entries in the Protein Data Bank (PDB) \citep{berman2000protein} 
as well as a program that calculates DSSP entries from PDB entries 
\citep{DSSPoriginal,DSSPother}.
The algorithm first detects the presence of backbone-backbone hydrogen bonds (H-bonds). An H-bond between amino acids is considered to be present if the electrostatic interaction energy, \(E\), between the carboxyl group of one and the amino group of another is calculated to be less than -0.5 kcal/mol. Specifically,  \(E = 0.084 \left[ \frac{1}{r(ON)} + \frac{1}{r(CH)} - \frac{1}{r(OH)} - \frac{1}{r(CN)} \right] \cdot 332\) kcal/mol where r(AB) is the interatomic distance between A and B. Following \cite{DSSPoriginal}, Hbond($i,j$) denotes that an H-Bond is present between the carboxyl group of residue $i$ and the amino group of residue $j$.

Once H-bond presence is decided, the algorithm determines the presence of elementary H-bond patterns: \(n\)-turns (where \(n\)=3, 4, or 5) and bridges (which can be parallel or antiparallel). An \(n\)-turn is considered to exist at residue $i$ if Hbond($i, i+n$) is present. A bridge may exist between two non-overlapping stretches of three residues each. A parallel bridge is said to be present if Hbond($i-1,j$) and Hbond($j,i+1$) or if Hbond($j-1,i$) and Hbond($i,j+1$). An antiparallel bridge is said to be present if Hbond($i,j$) and Hbond($j,i$) or if Hbond($i-1,j+1$) and Hbond($j-1,i+1$).

These patterns are then used to identify cooperative H-bond patterns: helices, \(\beta\)-ladders, and \(\beta\)-sheets. Helices consist of two consecutive \(n\)-turns for fixed $n$, e.g., a 4-helix is present from residue $i$ to $i+3$ if there is a 4-turn at residue $i-1$ and another at residue $i$. Note that a 3-helix is commonly called a $3_{10}$-helix, a 4-helix an $\alpha$-helix, and a 5-helix a $\pi$-helix. \(\beta\)-ladders consist of one or more consecutive bridges of identical type and \(\beta\)-sheets consist of one or more ladders connected by shared residues. A group of five residues with high curvature is known as bend. The curvature at residue $i$ is calculated as the angle between the backbone direction of the first three and last three residues of this group of five. Specifically, if $C_{j}^\alpha$ is the position vector of the $\alpha$-carbon of residue $j$, a bend is considered to exist if the angle between $C_{i}^\alpha - C_{i-2}^\alpha $ and $C_{i+2}^\alpha - C_{i}^\alpha$ is greater than 70\degree.

Although it is possible for an amino acid to belong to more than one of these structures, each residue is assigned a single letter from the list in Table \ref{table:secondary-structure}. The original algorithm assigned letters in the following order from left to right: H, B, E, G, I, T, S; once a residue is assigned a letter, it is not changed. More recent versions assign $\pi$-helices before $\alpha$-helices \cite{DSSPother} and also detect another type of helix known as a $\kappa$-helix or a poly-proline II (PPII) helix \cite{url:DSSPnew}.


\section{Additional Experiments Details}\label{appendix:experiments}
\subsection{Datasets and Experiment Overview}\label{appendix:dataset}
\subsubsection{Datasets}
\textbf{Reaction dataset}. For the reaction classification task, 3D structures of 37,428 proteins corresponding to 384 enzyme commission (EC) numbers are obtained from 
the Protein Data Bank, with EC annotations for each protein retrieved from the SIFTS database \citep{dana2019sifts}. The dataset is divided into 29,215 proteins for training, 2,562 for validation, and 5,651 for testing. Each EC number is represented across all three splits, and protein chains sharing more than 50\% sequence similarity are grouped.

\textbf{LBA dataset}. Following~\citep{jing2020learning}, we perform ligand
binding affinity predictions on a subset of the commonly-used PDBbind refined set ~\citep{wang2004pdbbind, liu2015pdb}. The curated dataset of 3,507 complexes is split into train/val/test splits based on a 30\% sequence identity threshold to verify the model generalization ability for unseen proteins. For a protein-ligand complex, we predict the negative log-transformed binding affinity $pK = -\log_{10} (K)$ in molar units.

\subsubsection{Experiment Overview} We primarily follow the GNN architectures, training setups, and hyperparameter search spaces used in the baseline models GVP-GNN~\citep{jing2020learning} and ProNet~\citep{wang2022learning}. Our SSHG model adopts nearly identical feature embedding functions, message passing blocks, and readout functions for the hierarchical geometric graphs—namely, the Intra-Structural Graph and Inter-Structural Graph. Furthermore, we integrate Mamba~\citep{gu2023mamba} to capture the sequential information of the tokens in the Inter-Structural Graph. See~\ref{appendix:experiments_architectures} and~\ref{appendix:implementation_details} for details.


\subsection{Model Configuration}\label{appendix:experiments_architectures}
We integrate our SSHG framework with two models tailored for protein tasks, one is GVPNet~\citep{jing2020learning} and the other is ProNet~\citep{wang2022learning}. Moreover, to capture the sequential information of the secondary structure tokens, we integrate Mamba~\citep{gu2023mamba} into our SSHG model. See~\ref{appendix:implementation_details_mamba} for further details. The illustration of the architectures of our SSHG model is shown in Figure~\ref{fig:architecture}. Below are some details:
\begin{itemize}
\item Message Passing Blocks: We use the same message-passing GNN (MPGNN) 
architectures 
as those in the baseline models~\citep{jing2020learning, wang2022learning}.
\item Edge Feature Function: We use the same edge feature construction methods as those in the baseline models~\citep{jing2020learning, wang2022learning}.
\item Scatter: The tensor output from the message passing blocks contains embedding vectors for all nodes across all graphs in the batch. We use the PyTorch scatter function to aggregate these node embeddings into a tensor with one embedding per graph, corresponding to the number of graphs in the batch. Similarly, we apply scatter to aggregate intra-structural graph node embeddings into a tensor with one embedding per unit in the inter-structural graph.

\end{itemize}

\begin{figure}[!ht]
\centering
\includegraphics[width=1\linewidth]{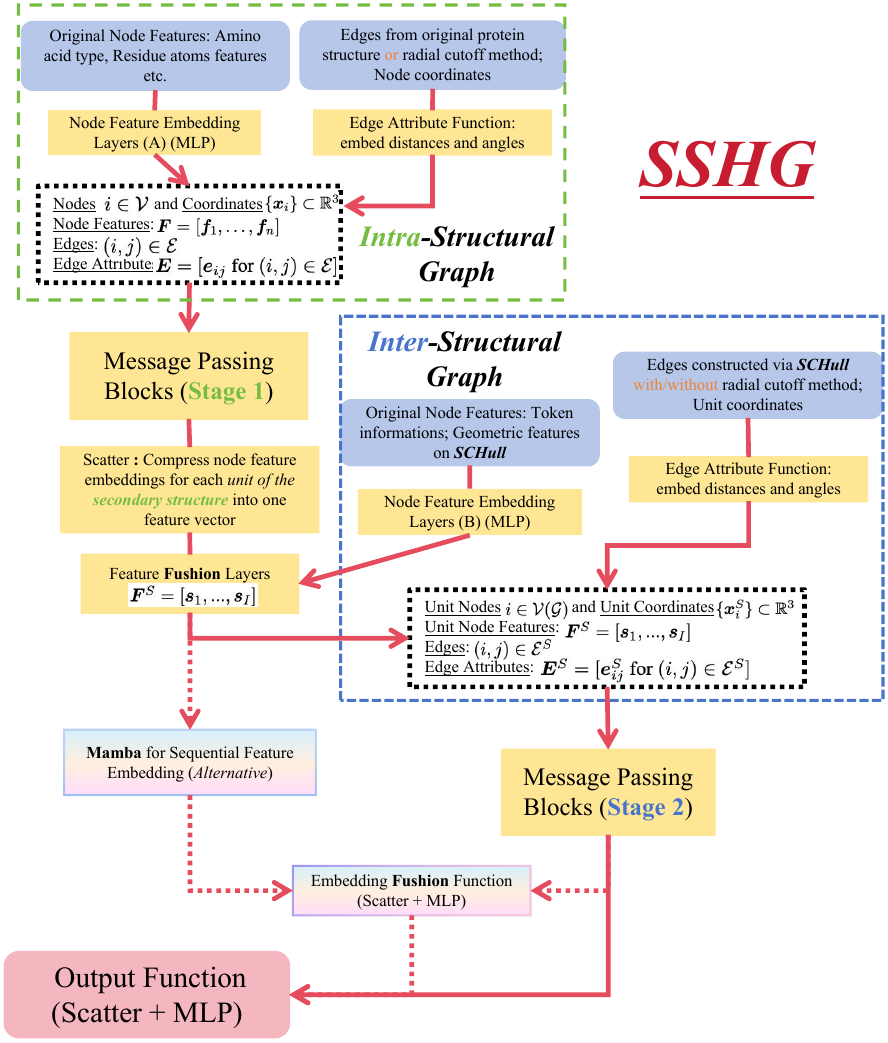}
\caption{Illustration of the architectures of our SSHG model, with and without the integration of Mamba. The \textcolor{red}{red arrows} and red dashed arrows indicate the input-output dependencies in the SSHG and SSHG+Mamba models, respectively.}
\label{fig:architecture}
\end{figure}
\subsection{Implementation Details}
\label{appendix:implementation_details}
\subsubsection{Integrate Mamba into SSHG}
\label{appendix:implementation_details_mamba}
To capture the sequential dependencies among secondary structure tokens in our SSHG model, we 
consider using Mamba. Mamba~\citep{gu2023mamba} is a special type of state space model defined by the following ODE system:
\begin{equation}
\begin{split}
\frac{d\vh(t)}{dt} &= \bm{\Lambda}\Big(\vx(t)\Big)  \vh(t) + \mB\Big(\vx(t)\Big)\vx(t) \\
\vy(t) & = \mW_o \vh(t)
\end{split}
\label{eq:mamba_ode}
\end{equation}
where:
\begin{itemize}
\item $t$ denotes the time step (discrete or continuous).
\item $\vx_t$ denotes the feature vector at time step $t$ in the input feature sequence.
\item $\vh(t)$ denotes the state vector, where the IC $\vh(0)$ is often a learnable vector.
\item $\bm\Lambda \big(\vx(t)\big)\coloneqq \text{diag}\big[\sigmoid\big(\mW_{\lambda}\vx(t)\big)-\bm 1]/dt$ is the input-conditioned decay/filter coefficient, where $\mW_{\lambda}$ is time-invariant learnable parameter and $\sigmoid$ denotes the sigmoid function.
\item $\mB\big(\vx(t)\big)\coloneqq \mW_{B}\vx(t)/dt$, where $\mW_{B}$ is time-invariant learnable parameter.
\item $\mW_o$ is time-invariant learnable parameter.
\end{itemize}
It has been increasingly adopted in large language models (LLMs)~\citep{sun2024llamba, jamba2024, dao2024mamba2} due to its efficiency and competitive performance compared to traditional Transformer architectures.\\\\
\textbf{SSHG+Mamba}: Let $\vx_i\coloneqq  s_i^{(0)} = \operatorname{readout}_1(\{\!\!\{ \vf_k^{(T_1)}\mid k\in\gV(\gG_i)\}\!\!\})$ be the initial node features for node $i$ of the inter-structural graph as defined in~\eqref{eq:inter-GNN}. 
Then we discretise \eqref{eq:mamba_ode} into
\begin{equation}
\begin{split}
\vh_{i+1} &= \Bar{\bm{A}}(\vx_i) \vh_i + \Bar{\bm{B}}(\vx_i)\vx_i \\
\vy_{i+1} &= \Bar{\bm{C}}(\vx_i) \vh_{i+1}
\end{split}
\end{equation}
and then obtain the outputs $\vs^{ssm}_i = \vy_{i+1}$ for $i\in\gV(\gG)$.\\\\
Back to the message passing across structural units, we obtain the node features of the inter-structural graph $\vs_{global}= \operatorname{ readout}_2(\{\!\!\{ \vs_i^{(T_2)}\mid i\in\gV(\gG)\}\!\!\})$ in~\eqref{eq:inter-GNN}. 
Then we input $\vs^{ssm}_i$ and $\vs_{global}$ into an output funtion tailored for the task's target final outputs. Figure~\ref{fig:architecture} shows the architecture of our model integrated with Mamba.

\subsection{Discussion on Mamba}

We observe that directly applying Mamba to the node sequences of the original protein graph increases the sequence length by over threefold, significantly slowing down the model. Despite this, the Mamba-only model achieves performance comparable to GNN-based methods such as ProNet, even without leveraging geometric information. This efficiency is largely due to \texttt{mamba\_ssm}, a highly optimized CUDA C++ implementation that enables fast training of complex sequence models. In contrast, most GNN implementations are primarily in Python, which introduces overhead due to slower data processing and iterative computation.

To illustrate the difference, consider the typical pipeline structures:

\begin{itemize}
    \item \textbf{GNN-based models (e.g., ProNet):} 
    \begin{itemize}
        \item Feature embedding $\rightarrow$ Python
        \item Iteration over blocks $\rightarrow$ Python
        \item Message passing $\rightarrow$ Python
        \item Output layer $\rightarrow$ Python
    \end{itemize}
    
    \item \textbf{Mamba\_ssm-based models:} 
    \begin{itemize}
        \item Feature embedding $\rightarrow$ Python
        \item Iteration over time steps $\rightarrow$ CUDA C++ (fused kernel)
        \item Selective SSM operations $\rightarrow$ CUDA C++
        \item Output layer $\rightarrow$ Python
    \end{itemize}
\end{itemize}

The expensive recurrent and state-space computations in Mamba are fused into CUDA kernels, bypassing Python's loop overhead. As a result, \texttt{mamba\_ssm} narrows the performance gap without relying on structural priors. While it achieves results comparable to other baselines, we still observe significant improvements when incorporating SSHG.

\subsubsection{Architecture and Experimental Setup} The number of message passing blocks, hidden channels, and dropout rates used for training SSHG on different tasks are listed in Table~\ref{tab:efgnn_params}.  The implementation of our methods is based on PyTorch and Pytorch Geometric, and all models are trained with the Adam optimizer. All are conducted on a single NVIDIA GeForce RTX 3090 24 GB. The hyperparameter searching space for training is shown in Table~\ref{tab:efgnn_train_params}.

\begin{table}[!ht]
\centering
\begin{tabular}{lcc}
\specialrule{1.2pt}{1pt}{1pt}
\multirow{2}{*}{Hyperparameter} & \multicolumn{2}{c}{Values/Search Space}  \\
& \textbf{React} & \textbf{LBA} \\
\hline
Number of layers (1st) & 1, 2  & 1, 2 \\
Number of layers (2nd) & 2, 3  & 2, 3 \\
Hidden channels & 64, 128, 256 & 128, 192, 256 \\
Dropout & 0.2, 0.3, 0.5 & 0.2, 0.3 \\
Mamba Blocks & 4 & 4 \\
\specialrule{1.2pt}{1pt}{1pt}
\end{tabular}
\caption{Model hyperparameters for SSHG}
\label{tab:efgnn_params}
\end{table}

\begin{table}[!ht]
\centering
\begin{tabular}{lcc}
\specialrule{1.2pt}{1pt}{1pt}
\multirow{2}{*}{Hyperparameter} & \multicolumn{2}{c}{Values/Search Space}  \\
& \textbf{React} & \textbf{LBA} \\
\hline
Epochs & 500, 1000  & 300, 500 \\
Batch size & 16, 32  & 8, 16, 32 \\
Learning rate & 1e-4, 5e-4 & 5e-5, 1e-4, 2e-4 \\
Learning rate scheduler & steplr  & steplr \\
Learning rate decay factor & 0.5  & 0.5\\
Learning rate decay epochs & 50, 100 & 50, 100 \\
\specialrule{1.2pt}{1pt}{1pt}
\end{tabular}
\caption{Training hyperparameters search space.}
\label{tab:efgnn_train_params}
\end{table}

\subsection{Additional Ablation Studies}
\label{appendix:ablation}

\begin{table}[!ht]
\centering
\fontsize{7}{7}\selectfont
\begin{tabular}{l|ccc|c}
\toprule
 &w/ SS & w/ hierarchical & w/ geometry  & Test Acc  \\
\midrule
ProNet & \ding{55}  & \ding{55} & \ding{55} & 86.4  \\
-- & \ding{51}  & \ding{55} & \ding{55} & 87.0 \\
-- & -- & \ding{51} & \ding{55} & 87.2  \\
-- & -- & \ding{51} & \ding{51} & 87.5 \\
\midrule
GVPGNN\citep{jing2020learning} & \ding{55} & \ding{55} & \ding{55}  & 68.5 \\
-- & \ding{51} & \ding{55} & \ding{55}  & 66.7 \\
-- & -- & \ding{51} & \ding{55}  & 71.5  \\
-- & -- & \ding{51} & \ding{51}  & 73.6 \\
\bottomrule
\end{tabular}
\caption{\footnotesize \textbf{Feature Selection}: Different GNNs with (w/) or without (w/o) hierarchical geometric graphs, geometric features $g_i^\top g_j$, or secondary structure tokens.
}
\label{tab:feature_selection}
\end{table}

Table~\ref{tab:feature_selection} shows that simply appending SS tokens as a feature to the original GNNs does not necessarily improve performance. In contrast, combining SS information and geometric features through a dedicated hierarchical mechanism leads to consistent improvements.

\end{document}